\title{Stochastic Sampling from Deterministic Flow Models}
\author{%
  Saurabh Singh \\
  Google DeepMind\\
  \texttt{saurabhsingh@google.com} \\
   \And
   Ian Fischer \\
   Google DeepMind \\
   \texttt{iansf@google.com} \\
}
\begin{document}

\maketitle

\begin{abstract}
Deterministic flow models, such as rectified flows, offer a general framework for learning a deterministic transport map between two distributions, realized as the vector field for an ordinary differential equation (ODE).
However, they are sensitive to model estimation and discretization errors and do not permit different samples conditioned on an intermediate state, limiting their application.
We present a general method to turn the underlying ODE of such flow models into a family of stochastic differential equations (SDEs) that have the same marginal distributions.
This method permits us to derive families of \emph{stochastic samplers}, for fixed (e.g., previously trained) \emph{deterministic} flow models, that continuously span the spectrum of deterministic and stochastic sampling, given access to the flow field and the score function.
Our method provides additional degrees of freedom that help alleviate the issues with the deterministic samplers and empirically outperforms them.
We empirically demonstrate advantages of our method on a toy Gaussian setup and on the large scale ImageNet generation task.
Further, our family of stochastic samplers provide an additional knob for controlling the diversity of generation, which we qualitatively demonstrate in our experiments.
\end{abstract}

\section{Introduction}

Deterministic flow models, including Rectified Flow~\citep{liu2022flow}, Flow Matching~\citep{lipman2022flow,tong2023improving}, Stochastic Interpolants~\citep{albergo2022building, albergo2023stochastic}, and probability flow ODE~\citep{song2020score} learn a reversible deterministic transport between two end distributions $p_0(x_0)$ and $p_1(x_1)$.
Diffusion models require one of the distributions to be a Gaussian distribution, though generalizations exist~\citep{yoon2024score}.
In contrast, Rectified Flows, Stochastic Interpolants, and Flow Matching offer a general framework for learning deterministic transports, without this restriction.
While deterministic transport enables efficient deterministic sampling, e.g. by the rectification procedure suggested by~\citet{liu2022flow}, stochastic sampling may be desirable for:
(1) robustness to estimation errors in the learned flow model,
(2) ability to produce random samples conditioned on an intermediate state $x_t, t \in [0, 1]$,
and (3) robustness to discretization error resulting from discrete step sampling from a continuous time model.
We present a new theorem (\cref{thm:general}) that provides a recipe to create an infinite family of parameterized stochastic samplers, given access to the flow field and the score function for the marginal distributions.
Our result provides a general and unified view, while including a few existing proposals (e.g. in \cite{huang2021variational,berner2022optimal}) as special cases.

The deterministic transport specifies a deterministic mapping between the samples from the two distributions and is realized as a learned vector field corresponding to an ordinary differential equation (ODE).
However, if one distribution is chosen to be a Gaussian, these Flow models can be viewed as reparameterizations of other deterministic models that also choose a Gaussian as one of the distributions e.g. probability flow ODEs arising from Gaussian diffusion models.
We refer to such models as \emph{Gaussian flow} models.
Transport map learning algorithms such as Gaussian flows are practical to train and enable applications like generative modeling~\citep{ramesh2022hierarchical,lu2022dpmsolver,saharia2022photorealistic,esser2024scaling}, stylization~\citep{isola2017image,meng2022sdedit}, and image restoration~\citep{delbracio2023inversion,rombach2022high,lugmayr2022repaint,kawar2022denoising}, to name a few.
However, corresponding deterministic sampler has limitations that we empirically demonstrate on a toy Gaussian task, where it exhibits a bias and consistently underestimates the variance of the target distribution, as seen in \cref{fig:toy_samplers}.
To enable stochastic sampling from such deterministic models, we provide a special case of our general result to turn the underlying ODE of Gaussian flow models into a family of stochastic differential equations (SDEs) that have the same marginal distributions.
Our stochastic samplers allow trading the bias of deterministic sampler for increased variance in the estimated mean and variance parameters (\cref{fig:toy_bias_var}).
Since, our method requires access to the score function for the marginal distributions, we impute it directly from the given flow model, alleviating the need for learning it separately.
This method permits us to derive families of \emph{stochastic samplers}, for fixed (e.g., previously trained) \emph{deterministic} Gaussian flow models, that allow flexible and time dependent injection of stochasticity during sampling, enabling both deterministic and stochastic sampling.
This additional degree of freedom allows exploration of stochastic samplers that can help alleviate the issues with the deterministic samplers and outperform them.
We demonstrate this empirically on a toy Gaussian setup, as well as on the large scale ImageNet generation task.
The stochastic samplers also provide an additional knob for controlling the diversity of generation as we qualitatively demonstrate in our experiments, and are compatible with classifier-free guidance~\citep{ho2022classifier}, as can be seen in \cref{fig:imagenet_cfg,fig:qual_alpha_vs_lambda}.

\textbf{Our key contributions are:}
\textbf{(1)} Specification of a flexible family of SDEs (\cref{thm:general}) that have the same marginal distributions as a given SDE or a flow model, enabling exploration of sampling schemes for a given fixed model,
\textbf{(2)} Derivation of new as well as existing special cases directly from \cref{thm:general} (\cref{cor:diffusion_corollary} and \cref{cor:detflow_corollary}) demonstrating generality of \cref{thm:general},
\textbf{(3)} Study of a set of SDE families corresponding to Gaussian flow models, derived using \cref{thm:general}, on both a toy as well as a large scale ImageNet setup, demonstrating flexible stochastic sampling and controllable diversity in generation, \emph{without requiring retraining} (\cref{tab:sde_family_table}, \cref{fig:imagenet_cfg,fig:qual_alpha_vs_lambda}).

\section{Background}
\label{sec:background}

\begin{figure}[t]
\centering
\includegraphics[width=0.99\textwidth]{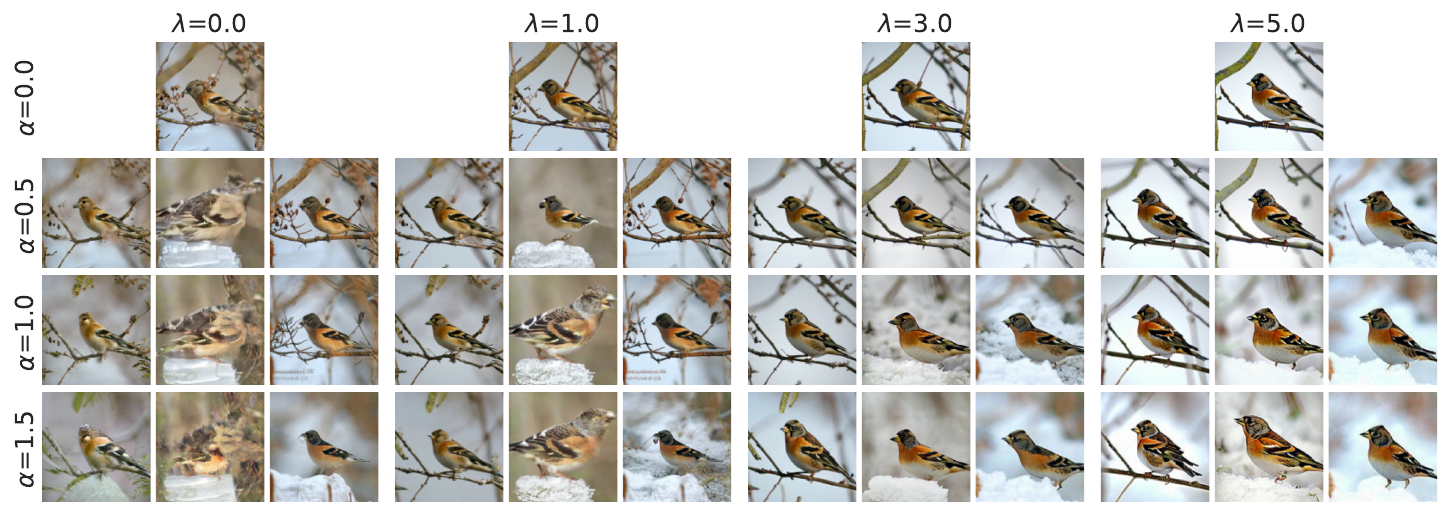}
\caption{%
    \textbf{Stochastic sampling improves diversity at all classifier-free guidance levels.}
    We visualize samples from a rectified flow model at four classifier-free guidance levels $\lambda$ (\cref{sec:score_function}) and at four stochasticity scales $\alpha$ for NonSingular (\cref{tab:sde_family_table}). 
    Three samples are shown for each configuration where the sampling starts at the same draw from $p_1(x_1)$.
    When $\alpha=0$, the sampler is deterministic and samples are the same (therefore we show only one).
    When $\lambda=0$, there is no classifier-free guidance.
    Note the increased diversity as $\alpha$ increases. More examples in \cref{fig:qual_alpha_vs_lambda}.
}
\label{fig:imagenet_cfg}
\vspace{-0.4cm}
\end{figure}

\paragraph{Notation.}
Throughout this work we use small Latin letters $t, x, y$ etc. to represent scalar and vector variables, $f, g$ etc. to represent functions, Greek letters $\alpha, \beta$ etc. to represent (hyper-)parameters, and capital letters $G$ to represent matrices.
With a slight abuse of notation we use lower case letters $x$ to represent both the random variable and a particular value $x \sim p(x)$.
Whenever unambiguous, we suppress the dependence of state $x_t$ on time $t$ as $x \equiv x_t$, and dependence of functions on state $x_t$ and time $t$ as $f \equiv f(x_t, t)$ to simplify notation.

We briefly discuss rectified flow and continuous time diffusion models.
Refer to \cite{liu2022flow, song2020score} for details.

\subsection{Rectified Flow}
\label{sec:rect_flow}

Let $x_0 \sim p_0(x_0) \in \R^d$ be the draws from the data distribution $p_0$ that we are interested in learning and sampling from.
Let $x_1\sim p_1(x_1) \in \R^d$ be an easy to sample source distribution.
Loosely, the key idea behind the diffusion and flow family of models is to learn a mapping that either stochastically or deterministically transforms a sample from $p_1$, in an iterative manner, to produce a sample from $p_0$.
Let $\nu(x_0, x_1)$ be an arbitrary coupling distribution for the two random variables $x_0, x_1$ such that $p_0(x_0) = \int \nu(x_0, x_1)dx_1, p_1(x_1) = \int \nu(x_0, x_1)dx_0$.
A simple choice is the product of the two: $\nu(x_0, x_1) \equiv p_0(x_0)p_1(x_1)$.
To construct a rectified flow first an interpolation between the two variables is defined as $x_t \equiv h(x_0, x_1, t)$ that is differentiable w.r.t. time.
The default interpolation proposed and studied in \citet{liu2022flow} is:
\begin{equation}
\label{eq:rect_flow_default_interp}
x_t = (1-t)x_0 + t x_1, \quad t \in [0, 1].
\end{equation}
With the above, rectified flow learns a vector field $v(x_t, t)$ by minimizing the following objective:
\begin{equation}
\label{eq:rectflow_obj}
v(x, t) = \argmin_{v'} \mathbb E_{(x_0, x_1)\sim \nu} \left[ \int_0^1 \left\lVert \frac{dx_t}{dt} - v'(x_t, t) \right\rVert^2 dt \right].
\end{equation}
The solution to the above optimization problem is $v(x, t) \equiv \mathbb E[x_1 - x_0 |x,t]$ and is referred to as 1-Rectified flow.
Since $v(x, t)$ is not available in closed-form in general, $v$ is typically parameterized with parameters $\theta$ and optimization in \Cref{eq:rectflow_obj} is performed w.r.t. $\theta$.
In the rest of the paper, we drop this dependence on the parameters in notation as we assume a model $v(x, t)$ to be given.
Note that a closed-form expression is available when $p_0, p_1$ are Gaussian (see \cref{sec:closed_gaussian_app}).
We use this expression for the toy setup in our experiments. For example, the biased deterministic sampler in \cref{fig:toy_samplers} is using the ground truth flow field.
Once the flow $v(x_t, t)$ is estimated, samples from $p_0(x_0)$ can be produced by drawing a sample from $p_1(x_1)$ and simulating the flow backward in time, using:
\begin{align}
\label{eq:rect_flow_ode}
dx = v(x, t)dt
\end{align}

\subsection{Score based diffusion with stochastic differential equations}
\label{sec:diffusion}

The general idea in this family of methods is to specify a forward stochastic process that slowly transforms the data density $p_0(x_0)$ into an easy to sample source density $p_1(x_1)$.
\citet{song2020score} specified such a process using an It\^o SDE of the following form:
\begin{align}
\label{eq:general_sde_paper}
dx = f(x, t) dt + G(x, t)dW_t
\end{align}
where $f(x,t): \R^d\times [0, 1] \rightarrow \R^d$ is the drift coefficient, $G(x, t) : \R^d\times [0, 1] \rightarrow \R^d \times \R^d$ is state and time dependent diffusion coefficient and $W_t$ is the Wiener process.
Choosing\footnote{%
    \citet{song2020score} provide general results for $G(x, t)$ which we omit here for brevity.
} $G \equiv g(t): [0, 1] \rightarrow \R$ and using results from \cite{anderson1982reverse}, a reverse time SDE can be specified that has the same marginals as \cref{eq:general_sde_paper}:
\begin{align}
\label{eq:reverse_sde_song}
dx = [f(x, t) - g^2(t)\nabla_{x} \ln p_t(x)]dt + g(t)d\tilde W_t
\end{align}
where $\tilde W_t$ is a standard Wiener process with time running backwards.
Note that the time reversal requires access to the score function $\nabla_{x} \ln p_t(x)$.
Score matching \citep{vincent2011connection} can be used to learn an estimate for the score for all $t$ \citep{song2020score}, which can then be used to simulate reverse time dynamics starting with a sample from $p_1(x_1)$ to produce a sample from $p_0(x_0)$ at $t=0$.
A forward deterministic process can also be derived from the above that has the same marginal densities $p_t(x)$:
\begin{align}
\label{eq:probflow_ode_song}
dx = \left[f(x, t) - \frac{1}{2}g^2(t)\nabla_{x} \ln p_t(x)\right]dt
\end{align}
The above ODE is also referred to as the probability flow ODE.
Samples can be generated using the above ODE in a similar fashion as rectified flow, by simulating the ODE backwards in time.

\section{Deriving stochastic samplers}
\label{sec:sampling}

\begin{figure}[t]
\centering
\includegraphics[width=0.99\textwidth]{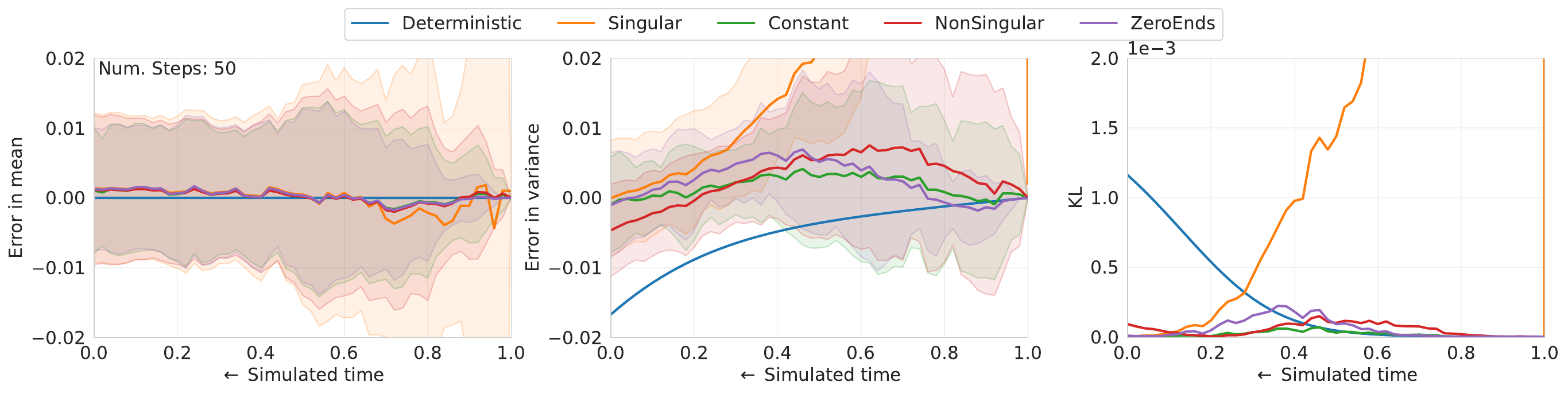}
\caption{%
    \textbf{Discretization of deterministic flow leads to bias.}
    Comparison of samplers from \cref{tab:sde_family_table} on the two Gaussian toy problem (\cref{sec:toy_details}).
    Deterministic underestimates the variance parameter, but the stochastic samplers avoid that issue, in exchange for variance in the parameter estimation.
    Singular's variance diverges if we start from $t=1$, so instead we start the sampler at $t=1-10^{-3}$, which allows it to eventually converge by $t=0$.
}
\label{fig:toy_samplers}
\end{figure}

\paragraph{Method intuition.}
Probability flow ODEs \citep{song2020score}, proposed in the context of diffusion models, provide a deterministic sampling method for diffusion models.
These ODEs have the same marginal distribution $p_t(x)$ at all $t$ as the original SDE from which they are derived.
Here, we take the reverse path: we start from an ODE (corresponding to the Gaussian flow model) and deduce the family of SDEs that have the same marginal distributions at all $t$ as the original ODE.
Before we introduce the general result, we will show a naive approach that gives an SDE with a problematic singularity, motivating the need for the generalization.

\subsection{A singular SDE corresponding to Gaussian flow}
\label{sec:gaussian_flow_sde}

For Gaussian flow, $p_1(x_1)\equiv N(x_1; \mu_1, \sigma_1^2I)$ is assumed to be Gaussian.
With interpolation $x_t = (1-t)x_0 + tx_1$, the perturbation kernel $p(x_t | x_0) = N(x_t; (1-t)x_0+t\mu_1, t^2\sigma_1^2I)$ is also Gaussian.

Note that since $x_0, x_1$ are independent, we can directly infer the first and second moments $\mu_t, \Sigma_t$ for the marginals $p_t(x)$ as $\mu_t =  (1-t)\mu_0 + t\mu_1$ and $\Sigma_t = (1-t)^2\Sigma_0 + t^2\sigma_1^2I$.
With these constraints and choosing $\mu_1 \equiv 0, \sigma_1 \equiv 1$, we can solve for drift and diffusion coefficients that yield the same marginal distributions:
\begin{align}
\label{eq:singular_choices}
f(x, t) &= -\frac{x}{1-t} & g(t) &= \sqrt{\frac{2t}{1-t}}
\end{align}
See \cref{sec:singular_sde_app} for the details and a more general expression for arbitrary $\mu_1, \sigma_1$.
The coefficients $f(x, t), g(t)$ are singular at the boundary $t=1$ of the interval.
Consequently, simulation methods such as Euler-Maruyama, that need $f(x, t), g(t)$ to be Lipschitz are not guaranteed to work at the boundary (see \cref{fig:toy_samplers} and \cref{sec:guass_experiments}).
We refer to this SDE as the Singular SDE.
An empirical trick that is often used in such cases is to assume $p_{1-\epsilon}(x_{1-\epsilon}) \approx p_1(x_1),  \epsilon \ll 1$.
However, this can lead to unpredictable behavior and we show how to avoid it in the following section.

\subsection{Set of SDEs that share the same marginal distribution $p_t(x)$}
\label{sec:general_sde}

First we state our general result with the diffusion coefficient $G(x, t)$ a function of both the state $x$ and time $t$, and then state simpler forms more directly applicable to models used in practice.

\begin{restatable}{theorem}{mainresult}
\label{thm:general}
Let $p_t(x)$ be the probability density of the solutions of the SDE in \cref{eq:general_sde_paper}
evolving as $\frac{\partial p_t}{\partial t}$.
Then, the probability density of solutions of the following set of SDEs, indexed by $\GT, \gamma_t$, also evolves as $\frac{\partial p_t}{\partial t}$.
\begin{align}
\label{eq:result_sde}
dx = \bar f(x,t)dt + \bar G(x, t)dW_t
\end{align}
where
\begin{align}
\bar f &= f - \frac{1}{2}\left(\nabla \cdot [(1-\gamma_t)GG^T-\GT\GT^T] + [(1-\gamma_t)GG^T-\GT\GT^T]\cdot\nabla\ln p_t \right)\\
\bar G &= [\gamma_t GG^T + \GT\GT^T]^{\frac{1}{2}}
\end{align}
and $\GT \equiv \GT(x, t), \gamma_t \ge 0$ are arbitrary functions such that the solutions of \cref{eq:result_sde} exist and are unique.
\end{restatable}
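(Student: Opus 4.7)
The plan is to prove the theorem by showing that both SDEs induce the same Fokker--Planck (Kolmogorov forward) equation for their marginal density $p_t$, since two SDEs whose densities satisfy the same linear PDE with the same initial condition must have identical marginal laws for all $t$.

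First I would write the Fokker--Planck equation for the original SDE in \cref{eq:general_sde_paper}, namely
\begin{equation*}
\frac{\partial p_t}{\partial t} = -\nabla \cdot (f p_t) + \tfrac{1}{2}\nabla \cdot \nabla \cdot (GG^T p_t),
\end{equation*}
and then apply the product-rule identity
\begin{equation*}
\nabla \cdot \nabla \cdot (A p_t) = \nabla \cdot\bigl[(\nabla \cdot A)\, p_t + A \nabla p_t\bigr] = \nabla \cdot\bigl[\bigl(\nabla \cdot A + A \nabla \ln p_t\bigr) p_t\bigr],
\end{equation*}
valid for any symmetric matrix field $A$, combined with $\nabla p_t = p_t \nabla \ln p_t$. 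This rewrites the evolution in a single-divergence (continuity) form with an effective drift
\begin{equation*}
u := f - \tfrac{1}{2}\bigl(\nabla \cdot (GG^T) + GG^T \nabla \ln p_t\bigr).
\end{equation*}

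Next I would carry out the same derivation for the proposed SDE \cref{eq:result_sde}, whose diffusion matrix satisfies $\bar G \bar G^T = \gamma_t GG^T + \tilde G \tilde G^T$ by construction. The effective drift becomes
\begin{equation*}
\bar u = \bar f - \tfrac{1}{2}\bigl(\nabla \cdot (\gamma_t GG^T + \tilde G \tilde G^T) + (\gamma_t GG^T + \tilde G \tilde G^T)\nabla \ln p_t\bigr).
\end{equation*}
Substituting the definition of $\bar f$ from the theorem statement, the terms involving $(1-\gamma_t)GG^T - \tilde G \tilde G^T$ in $\bar f$ exactly cancel against the $(\gamma_t GG^T + \tilde G \tilde G^T)$ correction coming from $\bar G \bar G^T$, because their sum is $GG^T$. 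Explicitly,
\begin{equation*}
\bar u = f - \tfrac{1}{2}\bigl(\nabla \cdot GG^T + GG^T \nabla \ln p_t\bigr) = u.
\end{equation*}
Hence $-\nabla \cdot (\bar u\, p_t) = -\nabla \cdot (u\, p_t) = \partial_t p_t$, so the marginal of the new SDE satisfies the same Fokker--Planck equation, and identical initial distributions imply identical marginals for all $t$.

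The main obstacle is mostly bookkeeping: carefully treating $\nabla \cdot$ acting on a matrix (acting column-wise or row-wise consistently with the symmetry of $GG^T$ and $\tilde G \tilde G^T$) and justifying the matrix square root $\bar G = [\gamma_t GG^T + \tilde G \tilde G^T]^{1/2}$ as a legitimate diffusion coefficient, which is why positive semidefiniteness (ensured by $\gamma_t \geq 0$ and $\tilde G \tilde G^T \succeq 0$) is assumed. Existence and uniqueness of solutions to \cref{eq:result_sde} is taken as a hypothesis, so standard well-posedness issues at boundaries (of the sort that arise in \cref{sec:gaussian_flow_sde}) are outside the scope of the proof itself.
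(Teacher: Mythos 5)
Your proposal is correct and takes essentially the same route as the paper: both proofs reduce the second-order Fokker--Planck term to a first-order divergence via the identity $\nabla\cdot\nabla\cdot(Ap_t) = \nabla\cdot\bigl[(\nabla\cdot A + A\nabla\ln p_t)p_t\bigr]$ and then show that the relevant terms cancel so that $p_t$ satisfies the FPK equation of the new SDE. The only cosmetic difference is that you package the cancellation as equality of two ``effective drifts'' $u = \bar u$ (i.e., both SDEs share the same probability-flow ODE), whereas the paper substitutes $\bar f p_t$ and $\bar G\bar G^T$ directly into the FPK equation and cancels term by term; the algebra is identical.
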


Proof of \cref{thm:general} is given in \cref{sec:mainresult_proof_app} and follows from manipulations of Fokker-Planck-Kolmogorov (FPK) equations corresponding to \cref{eq:result_sde}.

\cref{thm:general} implies that given the same initial density $p_0(x)$, evolution according to both \cref{eq:general_sde_paper} and \cref{eq:result_sde} will have the same marginal densities $p_t(x)$ for all times $t$.
Further, \cref{eq:result_sde} can be simulated backward in time using the result from \citet{anderson1982reverse}, again with the same marginal densities $p_t(x)$.
Consequently, \cref{eq:general_sde_paper} can be simulated forward or backward in time using any member of the family specified by \cref{eq:result_sde}.
Note that setting $\gamma_t=1, \GT = 0$ recovers the original SDE in \cref{eq:general_sde_paper}, while setting $\gamma_t=0,\GT = 0$ recovers the general probability flow ODE from \citet[eq. 37]{song2020score}.
Additionally, $\GT$ is particularly useful for deterministic flow models, further discussed in \cref{cor:detflow_corollary}.
\cref{thm:general} gives a recipe for developing particular samplers, such as those in the remainder of this section, some of which have appeared in the literature.
A priori, \cref{thm:general} cannot determine which concrete sampler will be best for a given application, but since the samplers do not require any training to use, it is possible to postpone the choice of sampler to an empirical analysis at test time.

The flexibility afforded by \cref{eq:result_sde} is particularly useful
\textbf{(1)} in the presence of singularities in the drift and diffusion coefficients $f$ and $G$ respectively of \cref{eq:general_sde_paper},
\textbf{(2)} in the presence of errors resulting from finite discretization,
and \textbf{(3)} for flexible specification of the diffusion coefficient in generative applications.
Our experimental evaluations primarily focus on these aspects of \cref{thm:general}.

A direct consequence of \cref{thm:general}, by defining $\GT\equiv0, G \equiv g(t)I$, is the following corollary applicable to commonly used generative diffusion models with additive noise:
\begin{restatable}{corollary}{corollaryone}
\label{cor:diffusion_corollary}
For the SDE in \cref{eq:general_sde_paper} with $G\equiv g(t)I$, a subset of SDEs prescribed by \cref{thm:general} and indexed by $\gamma_t$ is:
\begin{align}
dx &= \left[f(x, t) - \frac{(1 - \gamma(t))g^2(t)}{2}\nabla_x \ln p_t(x)\right]dt + \sqrt{\gamma(t)} g(t) dW_t
\label{eq:diffusion_corollary}
\end{align}
\end{restatable}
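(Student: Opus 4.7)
The plan is to treat this corollary as a direct instantiation of \cref{thm:general}, substituting the specific choices $\tilde G \equiv 0$ and $G \equiv g(t) I$ and then simplifying the resulting expressions. First I would form the two matrix products that appear throughout \cref{thm:general}: $G G^T = g^2(t) I$ and $\tilde G \tilde G^T = 0$. Combining these gives $(1-\gamma_t) G G^T - \tilde G \tilde G^T = (1-\gamma_t)\, g^2(t)\, I$, a purely time-dependent scalar multiple of the identity.

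Next I would evaluate the two terms inside the bracket defining $\bar f$. The divergence term $\nabla \cdot [(1-\gamma_t) g^2(t) I]$ is zero because the coefficient $(1-\gamma_t) g^2(t)$ depends only on $t$, not on $x$, so the spatial divergence of each column of this matrix vanishes. The remaining term is the matrix-vector product $[(1-\gamma_t) g^2(t) I] \cdot \nabla \ln p_t(x) = (1-\gamma_t) g^2(t) \nabla_x \ln p_t(x)$. Inserting these back into the formula for $\bar f$ from \cref{thm:general} yields exactly the drift appearing in \cref{eq:diffusion_corollary}.

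For the diffusion coefficient, I would simplify $\bar G = [\gamma_t\, G G^T + \tilde G \tilde G^T]^{1/2} = [\gamma_t g^2(t) I]^{1/2} = \sqrt{\gamma_t}\, g(t)\, I$, which matches the noise term $\sqrt{\gamma(t)}\, g(t)\, dW_t$ in the statement. Assembling the drift and diffusion pieces recovers \cref{eq:diffusion_corollary}, and since the hypotheses of \cref{thm:general} include existence and uniqueness of solutions for the chosen $\gamma_t, \tilde G$, the marginal density of the resulting SDE coincides with $p_t(x)$.

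There is no substantive obstacle here; the corollary is essentially a bookkeeping exercise in substitution. The only point requiring a moment of care is recognizing that $\nabla \cdot [(1-\gamma_t) g^2(t) I]$ vanishes because the coefficient is spatially constant, and that the matrix square root trivializes when the argument is a nonnegative scalar multiple of the identity. Everything else is routine algebra applied to \cref{thm:general}.
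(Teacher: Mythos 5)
Your proof is correct and follows exactly the same route as the paper's: substitute $\tilde G \equiv 0$, $G \equiv g(t)I$ into \cref{thm:general}, observe that the divergence term vanishes because $(1-\gamma_t)g^2(t)$ is spatially constant, and simplify the matrix square root. No discrepancies.
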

Proof in \cref{sec:corollaryone_proof_app}.
Note that choosing $\gamma_t = 0$ results in the probability flow ODE specified in \cref{eq:probflow_ode_song}.
Intuitively, the members in the family differ in terms of the amount of noise injected as a function of time.
$\gamma_t = 0$ yields a purely deterministic simulation; $\gamma_t > 0$ yields a variety of stochastic simulations.
Further, similar special cases discussed in \cite{huang2021variational} and \cite{berner2022optimal} also directly follow from \cref{thm:general} as well.

Some properties of \cref{cor:diffusion_corollary}:
\begin{enumerate}[leftmargin=*]
    \item $\gamma(t)$ can be chosen at sampling time and doesn't affect the training of the score function.
    \item With $\gamma(t) = \hat \gamma^2(t)g^{-2}(t)$, where $\hat \gamma(t)$ is an arbitrary function (satisfying constraints of \cref{thm:general}), we can choose an arbitrary diffusion term at sampling time.
    For example, choosing $\gamma_t = \gamma^2/g^2(t)$ leads to a constant diffusion coefficient.
    \item For the SDE specified by \cref{eq:singular_choices}, we can choose $\gamma(t) = (1-t)\hat \gamma^2(t)g^{-2}(t)$ to get rid of the singularity in the diffusion term.
\end{enumerate}

Note that \cref{thm:general} can be used whenever we have access to the score function $\nabla_x \ln p_t$.
Next, we first construct a specialized solution based on \cref{thm:general} for deterministic flow models that enables flexible control of both drift and diffusion coefficients, and apply it to the special case of deterministic Gaussian flows where the score function can be imputed from the velocity field (\cref{sec:score_function}).
Recall that deterministic flows specify a transport via the ODE $dx = v(x, t)dt$.
This ODE can be viewed as an SDE where the diffusion term has been set to zero.
Choosing $G \equiv 0, \GT \equiv \tilde g(t)I$ in \cref{thm:general} gives \cref{cor:detflow_corollary}, which enables deriving stochastic samplers for Gaussian flow models:
\begin{restatable}{corollary}{corollarytwo}
\label{cor:detflow_corollary}
For the ODE in \cref{eq:rect_flow_ode}, a subset of SDEs prescribed by \cref{thm:general} and indexed by $\tilde g(t)$ is
\begin{align}
\label{eq:detflow_corollary}
dx &= \left[v(x, t) + \frac{\tilde g^2(t)}{2}\nabla_x \ln p_t(x)\right]dt + \tilde g(t) dW_t
\end{align}
\end{restatable}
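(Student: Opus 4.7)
The plan is to obtain \cref{cor:detflow_corollary} as a direct specialization of \cref{thm:general} with the choices $f(x,t) = v(x,t)$, $G(x,t) \equiv 0$, and $\GT(x,t) \equiv \tilde g(t) I$, leaving $\gamma_t$ arbitrary (I expect it to drop out of the final expression). The key conceptual step I would make up front is to recognize that the ODE $dx = v(x,t)dt$ is exactly the degenerate instance of \cref{eq:general_sde_paper} with zero diffusion coefficient, whose marginal density evolves by the continuity equation $\partial_t p_t = -\nabla\cdot(v p_t)$. This is the $G=0$ case of the FPK equation, so \cref{thm:general} applies and the prescribed family will share the same marginals as the original flow.

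Next I would substitute into the formulas for $\bar f$ and $\bar G$. Since $GG^T = 0$ and $\GT\GT^T = \tilde g^2(t) I$, the matrix $(1-\gamma_t)GG^T - \GT\GT^T$ collapses to $-\tilde g^2(t) I$, which is state-independent. Its divergence therefore vanishes, while its contraction against the score contributes $-\tilde g^2(t)\nabla_x \ln p_t(x)$, giving
\begin{equation*}
\bar f = v(x,t) - \tfrac{1}{2}\bigl(0 - \tilde g^2(t)\nabla_x \ln p_t(x)\bigr) = v(x,t) + \tfrac{\tilde g^2(t)}{2}\nabla_x \ln p_t(x).
\end{equation*}
For the diffusion coefficient, $\bar G = [\gamma_t\cdot 0 + \tilde g^2(t) I]^{1/2} = \tilde g(t) I$, which is manifestly independent of $\gamma_t$. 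Assembling these pieces reproduces \cref{eq:detflow_corollary}.

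Finally I would note the regularity side condition from \cref{thm:general}: for any bounded measurable $\tilde g$ and sufficiently regular $v$ and $\nabla_x \ln p_t$, the resulting SDE admits a unique strong solution, so the hypotheses are met. I do not anticipate a genuine obstacle here; the only point requiring any thought is the very first one, namely reading the deterministic flow as the $G\equiv 0$ case of \cref{eq:general_sde_paper} so that \cref{thm:general} is available. The FPK manipulations underlying \cref{thm:general} are linear in $GG^T$ and $\GT\GT^T$ separately, so they remain valid in this limit and the substitution above is the entire content of the proof.
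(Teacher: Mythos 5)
Your proposal is correct and follows essentially the same route as the paper's proof: specialize \cref{thm:general} with $f = v$, $G \equiv 0$, $\GT \equiv \tilde g(t) I$, observe that the divergence of the state-independent matrix vanishes, and simplify $\bar f$ and $\bar G$ directly. The extra remarks you add (reading the ODE as the $G=0$ degenerate SDE, and noting $\gamma_t$ drops out) are minor elaborations of the same computation, not a different argument.
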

Proof in \cref{sec:corollarytwo_proof_app}.
\cref{cor:detflow_corollary} enables flexible specification of a time dependent diffusion coefficient $\tilde g(t)$, allowing the introduction of stochasticity in the simulation of otherwise deterministic models, \emph{purely at sampling time}.
Note that \cref{eq:detflow_corollary} requires access to the score function $\nabla_x \ln p_t(x)$ for the marginal distributions $p_t(x)$.  In \cref{sec:score_function}, we describe how the score function can be imputed from the learned flow model $v(x, t)$ for the special case of Gaussian flow models.
It can be verified that the particular choice of $f$ and $g$ in \cref{eq:singular_choices}  satisfy \cref{eq:detflow_corollary} by using the expression for the score from \cref{eq:rect_score}.

\begin{table}[t]
  \caption{%
    Examples of SDEs that have the same marginal distribution $p_t(x)$ as a given Gaussian flow specified by $v\equiv v(x,t)$.
    $\alpha \ge 0$ is a scale parameter that varies the magnitude of the diffusion coefficient $g$.
    Each of these behaves differently when discretized and simulated (\Cref{fig:toy_samplers} and \cref{sec:diff_coeff_mag_qual}).
    These and infinitely many more can be constructed using the scheme in \cref{eq:detflow_corollary}.
  }
  \label{tab:sde_family_table}
  \centering
  \begin{tabular}{lLLl}
    \toprule
    Name &\tilde g(t) & \tilde f(x, t) &  Description  \\
    \midrule
    Deterministic &  0 & v & Base flow model  \\
    Constant & \alpha & v + \frac{\alpha^2}{2} \nabla_x \ln p_t & \text{Constant $g$, singular $f$} \\
    Singular & \alpha \sqrt{t/(1-t)} & v + \frac{\alpha^2}{2}\frac{t}{1-t} \nabla_x \ln p_t  & Singular $g, f$\\
    NonSingular & \alpha\sqrt{t}, & v + \frac{\alpha^2}{2}t\nabla_x \ln p_t & \text{Non-singular} $g,f$ \\
    ZeroEnds & \alpha\sqrt{t(1-t)}, & v + \frac{\alpha^2}{2}t(1-t)\nabla_x \ln p_t & Non-singular $g,f, g(0)=g(1)=0$ \\
    \bottomrule
  \end{tabular}
\end{table}

While infinitely many choices are available for $\tilde g$, we consider a few interesting ones listed in the \cref{tab:sde_family_table}, constructed by choosing integer powers of $t$ and $1-t$ and introducing a scaling coefficient $\alpha$, for experimental evaluations.
Note that the only degree of freedom in \cref{tab:sde_family_table} is the choice of $\tilde g(t)$, which determines $\tilde f(x, t)$, given the flow field $v(x, t)$ and the score $\nabla_{x}\ln p_t(x)$.
The $f(x,t)$ is singular in Constant because the score $\nabla_x \ln p_t(x_t)$, as computed in \cref{eq:rect_score}, has $t$ in the denominator, making $f(x, t)$ singular at $t=0$.
The choice in NonSingular precisely eliminates this singularity.
\Cref{fig:toy_samplers} compares these choices in a toy setup; \cref{sec:experiments} has comparisons on ImageNet.

\subsection{Score function and classifier free guidance for a Gaussian flow model}
\label{sec:score_function}

Recall that \cref{thm:general} requires access to the score function.
For Gaussian flows, the score function can be inferred from the velocity field itself, alleviating the need to learn it separately.
This result is known (see e.g. \cite{zheng2023guided} in the context of flow matching) and we present it here in our setting.
For Gaussian flows, with $p_1(x_1) \equiv N(x_1; \mu_1, \sigma_1^2I)$ and interpolation specified in \cref{eq:rect_flow_default_interp}, the score can be computed as:
\begin{align}
\label{eq:rect_score}
\nabla_{x} \ln p_t(x) &= \frac{-(1-t)v(x, t)+\mu_1-x}{t\sigma_1^2}
\end{align}
where $v(x, t) = \E [x_1-x_0 | x, t]$ is the estimated flow.
Proof is provided in \cref{sec:score_fn_app}.
Note that the score function can also be estimated given $\E [x_0 | x, t]$ or $\E [x_1 | x, t]$.
In summary, the expression follows directly from using results from Denoising Score Matching \citep{vincent2011connection} and the Gaussianity of $p_1(x_1)$.
Similar expressions can be derived for other interpolations that are linear in $x_0, x_1$.
With access to the score function and linearity of \cref{eq:rect_score} in $v$ we can define classifier free guided~\citep{ho2022classifier} Gaussian flow as:
\begin{align}
v_{\text{cfg}}(x, t, c) = (1 + \lambda)v(x, t, c) - \lambda (v(x, t, c=\varnothing)
\end{align}
where $c$ indicates extra conditioning information as in classifier free guidance, $\varnothing$ indicates no conditioning and $\lambda$ specifies the relative strength of the guidance.
$\lambda = 0$ reduces to class conditional sampling, while $\lambda > 0$ puts a larger weight on the conditioning, biasing the sample towards the modes of the conditional distribution.
Using classifier-free guidance with a stochastic sampler will, of course, give diversity that isn't possible with a deterministic sampler, as can be seen in \cref{fig:imagenet_cfg}.
Note that \cite{xie2024reflected,dao2023flow,zheng2023guided} also consider related definitions in the context of flow matching.

\section{Experiments}
\label{sec:experiments}

Our method allows us to identify a family of SDEs that correspond to a given deterministic Gaussian flow model, enabling construction of stochastic samplers with flexible diffusion coefficients.
In our experiments we compare various samplers derived from the corresponding SDEs in \cref{tab:sde_family_table}, using Euler-Maruyama, for a given Gaussian flow model \emph{without any additional training}. 

\begin{figure}[tb]
\centering
\includegraphics[width=0.99\textwidth]{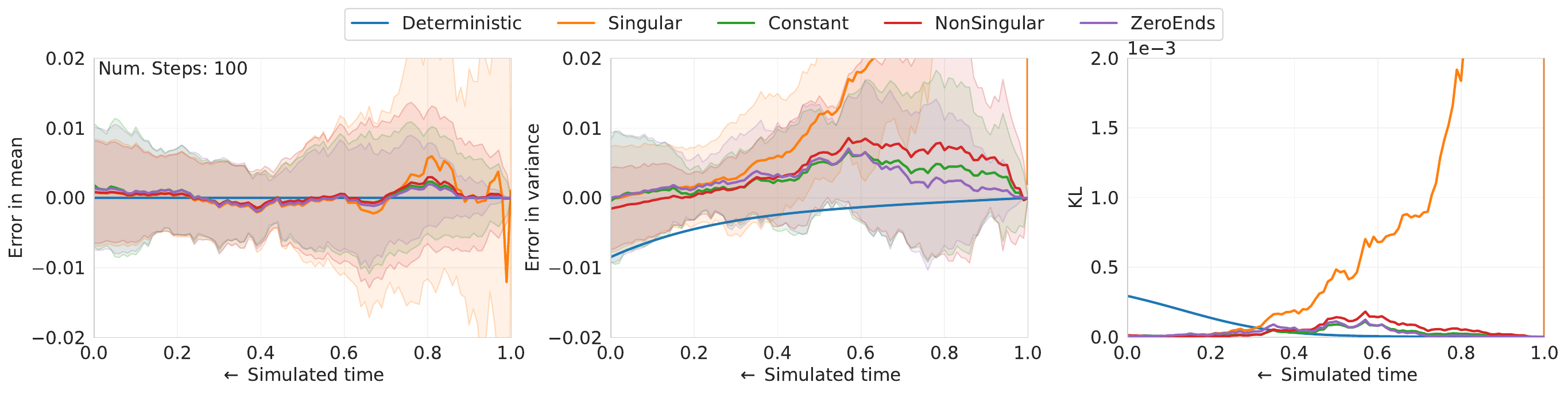}
\includegraphics[width=0.99\textwidth]{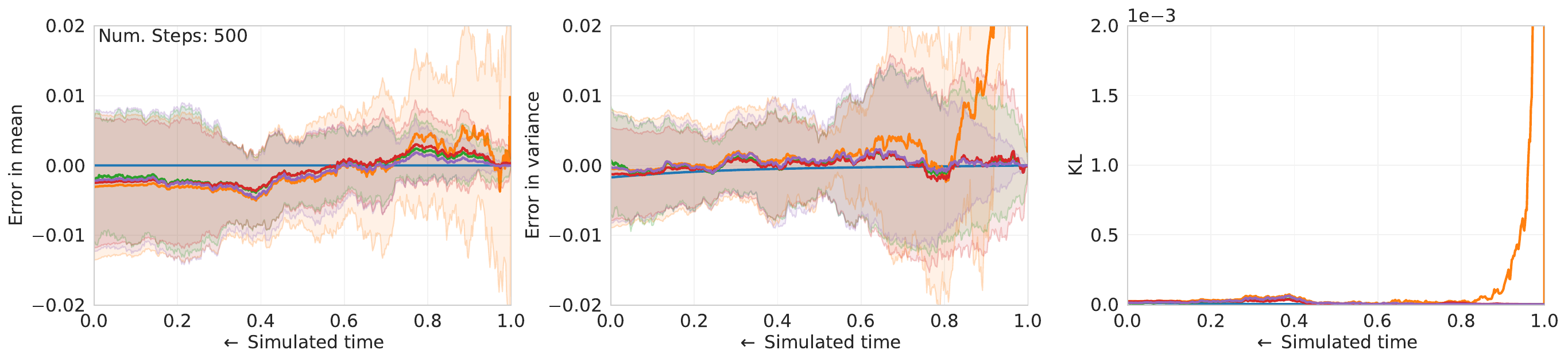}
\caption{%
    \textbf{Stochasticity is most helpful at coarser discretizations.}
    We visualize the effect of coarseness of discretization by sampling for 100 and 500 sampling steps.
    See \cref{fig:toy_samplers} for the same plots at 50 steps, which shows more extreme bias in variance for Deterministic and Singular.
}
\label{fig:toy_steps_eval}
\vspace{-0.3cm}
\end{figure}

\begin{figure}[tb]
\includegraphics[width=0.99\textwidth]{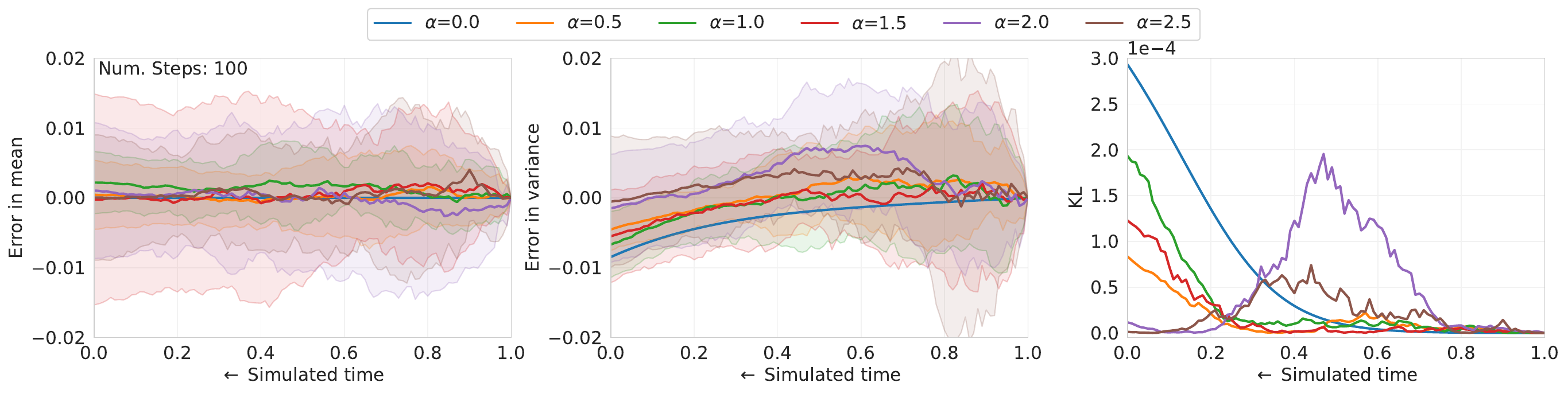}
\caption{%
    \textbf{Stochasticity helps mitigate bias.}
    We plot the error in mean and error in variance for NonSingular for a set of diffusion coefficient scales $\alpha \in \{0.0, 0.5, 1.0, 1.5, 2.0, 2.5\}$. 
    Estimates for variance at $t=0$ improve as $\alpha$ increases, leading to a drop in KL divergence from the true distribution.
    However, with very high $\alpha$ values intermediate marginals develop a bias. 
}
\label{fig:toy_bias_var}
\vspace{-0.4cm}
\end{figure}

\subsection{Comparison on estimating marginal statistics for a two Gaussian toy problem}
\label{sec:guass_experiments}

We start by considering a toy problem where both $p_0$ and $p_1$ are Gaussian.
See \cref{sec:toy_details} for details of the experimental setup and \cref{sec:code} for a JAX~\citep{jax2018github} implementation of NonSingular.

\paragraph{Discretization of deterministic flow leads to bias.}
In \cref{fig:toy_samplers}, with 50 sampling steps, we observe that the estimate for the mean is fairly accurate for all samplers for the entirety of the interval $t \in [0, 1]$.
However, the samplers differ in their behavior for variance.
Deterministic exhibits a noticeable bias and underestimates the variance (with zero variance in its estimate), with the worst estimate at $t=0$.
Stochastic samplers provide noticeably better estimates at $t=0$, but with increased variance.

\paragraph{Stochasticity is most helpful at coarser discretizations.}
In \cref{fig:toy_steps_eval} we study the effect of the number of discretization steps on the different samplers (also see \cref{fig:toy_samplers} for 50 steps).
While mean estimates are accurate for all methods, Deterministic gets increasingly biased for variance estimates as the number of sampling steps is decreased.
Stochastic samplers perform consistently well at various discretization levels for $t=0$, with significantly better estimates for fewer sampling steps.
Note that Singular has very large bias as well as variance closer to $t=1$; those improve with finer discretization.
Since Constant also has a singularity, but only in the drift term $f$, we conclude that the instability is primarily due to the singularity in Singular's diffusion term.

\paragraph{Stochasticity helps mitigate bias.}
In \cref{fig:toy_bias_var} we study the effect of diffusion coefficient scale $\alpha$ on the NonSingular sampler at 100 sampling steps.
Finite discretization introduces a bias in the deterministic sampler (when $\alpha=0$), where the variance is consistently underestimated and is worst at $t=0$.
Increased stochasticity with increasing diffusion coefficient scale ($\alpha>0$) helps mitigate this bias at the cost of increased variance.
This can be seen in the figure with larger $\alpha$ values yielding better estimate of the variance, although with larger variance in the estimate.

\subsection{Comparison of SDEs for rectified flows on ImageNet generation}
\label{sec:imagenet_experiments}

We compare the behavior of various SDEs on the sampling quality for large scale image generation using the ImageNet (2012) dataset~\citep{deng2009imagenet,ILSVRC15}.
We train rectified flow models at two different image resolutions ($64 \times 64$ and $128 \times 128$) and compare their sample quality using the Frechet Inception Distance (FID) metric~\citep{heusel2017gans} for class conditional samples.
See \cref{sec:imagenet_training_details_app} for setup details.
The results show that small but statistically significant differences exist between samplers even for metrics like FID, but the optimal sampler is likely to be application and model specific.

\paragraph{Stochasticity can improve FID.}
In \cref{tab:best_fid} we report the best FID using each SDE in \cref{tab:sde_family_table} for two image resolutions using 300 sampling steps, along with the corresponding diffusion term scale $\alpha$ and one standard deviation confidence interval.
Two key observations stand out:
\textbf{(1)} stochastic samplers tend to produce better FIDs,
and \textbf{(2)} the two non-singular samplers have much better FIDs than Deterministic or Singular.
Note that observation \textbf{(1)} has also been made previously for probability flow ODEs \citep{song2020score}.
The addition of a parameter $\alpha$ to control the strength of the stochasticity while keeping the marginal distribution $p_t$ unchanged (\cref{thm:general}), permits principled post-training optimization of the metrics like FID.

\paragraph{Non-singular samplers work well over a broad range of $\alpha$.}
In \cref{fig:imagenet_fid_alpha,fig:imagenet_fid_alpha_uncropped} we show how the FID varies with $\alpha$ for each sampler for two different image resolution models.
NonSingular and ZeroEnds attain better FID in general and are better behaved over a much larger range of the diffusion coefficient scale $\alpha$ at both resolutions. 
These samplers both have small diffusion coefficients $g(t)$ close to $t=0$; their performance indicates that noise near $t=0$ is particularly harmful.
The low variance of ZeroEnds in comparison to NonSingular indicates that a large diffusion coefficient near $t=1$ tends to introduce variance in the final FID.

\begin{table}[t]
  \caption{%
    \textbf{Stochasticity can improve FID.}
    Comparison of various samplers at their best $\alpha$ values with 300 sampling steps for ImageNet image generation task at two resolutions.
  }
  \label{tab:best_fid}
  \centering
  \begin{tabular}{lLLLL}
    \toprule
    & \multicolumn{2}{c}{$64 \times 64$} & \multicolumn{2}{c}{$128 \times 128$} \\
    \cmidrule(r){2-3}
    \cmidrule(r){4-5}
    Sampler & \text{FID} &  \alpha & \text{FID}  &  \alpha  \\
    \midrule
    Deterministic & 3.07 \pm 0.01 & 0.0  & 5.19 \pm 0.02 & 0.0 \\
    Singular & 3.07 \pm 0.01 & 0.08 & 5.13 \pm 0.04 & 0.14 \\
    Constant $g$ & 2.97 \pm 0.04 & 0.08 & 5.17 \pm 0.05 & 0.1 \\
    NonSingular & \bf{2.95 \pm 0.01} & 0.56 & \bf{4.93 \pm 0.06} & 0.42 \\
    ZeroEnds & \bf{2.95 \pm 0.01} & 0.54 & 5.03 \pm 0.01 & 0.52 \\
    \bottomrule
  \end{tabular}
\end{table}

\begin{figure}[tbp]
\centering
\begin{subfigure}[b]{0.48\linewidth}
\includegraphics[width=\linewidth]{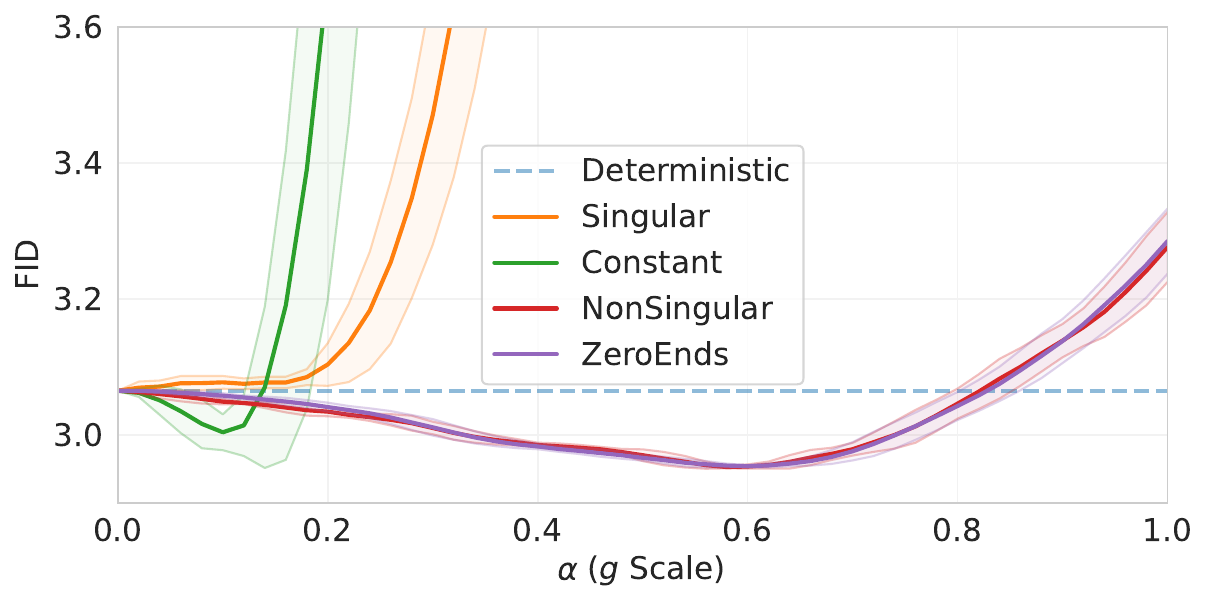}
\caption{64 $\times$ 64}
\label{fig:alpha_comparison_crop_64}
\end{subfigure}
\begin{subfigure}[b]{0.48\linewidth}
\includegraphics[width=\textwidth]{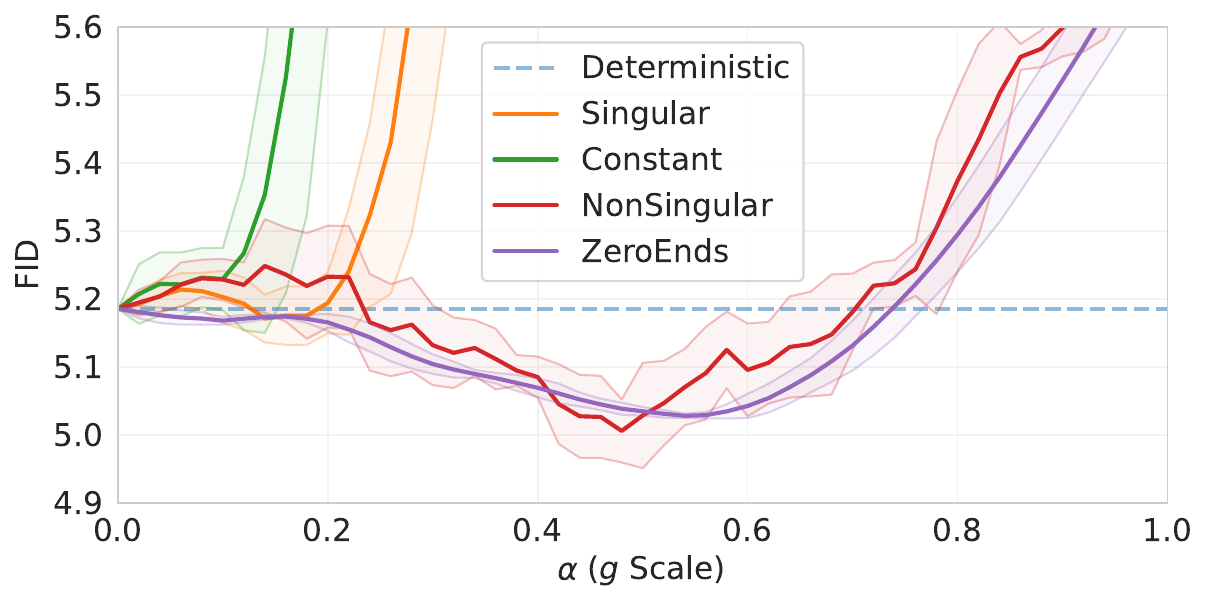}
\caption{128 $\times$ 128}
\label{fig:alpha_comparison_crop_128}
\end{subfigure}
\caption{%
    \textbf{Non-singular samplers work well over a broad range of $\alpha$.}
    Plots of FID for each sampler as the diffusion coefficient scale $\alpha$ is increased.
    Note that at $\alpha=0$ all samplers coincide.
    See \cref{fig:imagenet_fid_alpha_uncropped} for a larger range of FIDs.
}
\label{fig:imagenet_fid_alpha}
\end{figure}

\paragraph{Stochasticity makes FID robust to discretization.}
In \cref{fig:imagenet_fid_numsteps} we compare the effect of the number of sampling steps on FID for various samplers at two image resolutions.
We set $\alpha^2$ proportional to the number of sampling steps with the maximum value provided by \cref{tab:best_fid}.
Again the non-singular samplers perform better than Deterministic at all discretization levels.

\paragraph{Stochastic sampling improves diversity at all classifier-free guidance levels.}
In \cref{fig:imagenet_cfg,fig:qual_alpha_vs_lambda} we show samples from NonSingular using classifier-free guidance (\cref{sec:score_function}), varying both $\alpha$ and $\lambda$, the guidance weight.
In all cases, we can see that diversity increases with $\alpha$, and class typicality increases with $\lambda$.

\paragraph{Qualitative comparisons.}
For qualitative comparisons, we visualize a few samples at various diffusion coefficient scales using different SDEs in \cref{fig:constantg_scale,fig:nonsingular_scale,fig:singular_scale}.
All samples in a column are generated by starting at the same draw $x_1 \sim p_1(x_1)$; different columns start from different draws.
Noise scale $\alpha$ gets progressively larger as we move down the rows.
For Constant, we observe that samples get increasingly noisy with increasing $\alpha$ indicating accumulating errors with increasing scale.
The samples from NonSingular look better, as expected from \cref{fig:imagenet_fid_alpha}.
Lastly, samples from Singular change much more rapidly in comparison to the other samplers, indicating that the singularities in the SDE coefficients increase the effect of noise.

\begin{figure}[tbp]
\centering
\begin{subfigure}[b]{0.48\linewidth}
\includegraphics[width=\linewidth]{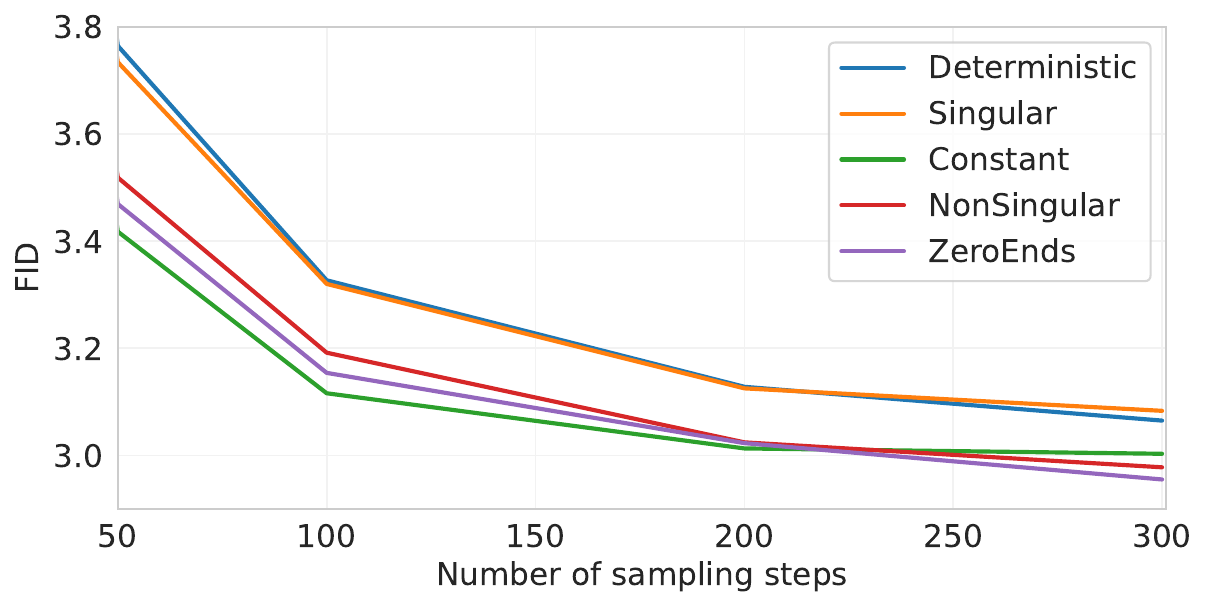}
\caption{64 $\times$ 64}
\label{fig:imagenet_fid_numsteps_64}
\end{subfigure}
\begin{subfigure}[b]{0.48\linewidth}
\includegraphics[width=\textwidth]{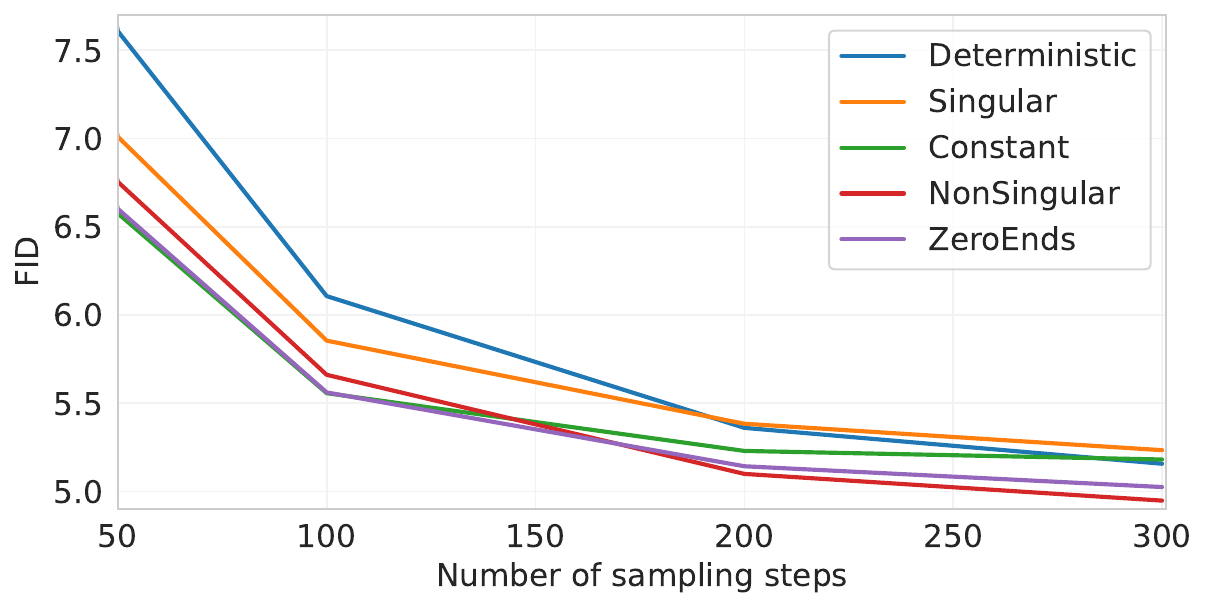}
\caption{128 $\times$ 128}
\label{fig:imagenet_fid_numsteps_128}
\end{subfigure}
\caption{%
    \textbf{Stochasticity makes FID robust to discretization.}
    We compare the effect of number of sampling steps on FID.
    Deterministic is always worse than the non-singular samplers.
}
\label{fig:imagenet_fid_numsteps}
\vspace{-0.4cm}
\end{figure}

\section{Related work}

Transport learning methods learn a mapping between two distributions, where the learned model can transform a sample from one distribution into a sample from the other one.
Typically, one of the distributions is easy to sample (such as a Gaussian) and the other one is the data distribution that one is interested in modeling.
The learned mapping can either be deterministic or stochastic.
A thorough overview of related areas can be found in \citet{yang2024diffusion}.

\paragraph{Deterministic transport.}
Deterministic transport methods implement a change of variable, either explicitly or approximately, that can be used to uniquely map a sample from one distribution to the other.
The normalizing flow family~\citep{rezende2015variational,dinh2017density,kingma2018glow} of methods construct an explicit invertible model that realizes this map either in one step or a finite number of discrete steps.
Neural ODEs~\citep{chen2018neural,grathwohl2018scalable} generalize from discrete steps to a continuous time mapping by inferring and learning the gradient field for all times.
However, Neural ODEs are difficult to train due to the need for simulating the ODEs as part of the training.
Rectified flows, flow matching, and iterative denoising methods~\citep{liu2022flow,lipman2022flow,tong2023improving,heitz2023iterative,delbracio2023inversion} either implicitly or explicitly specify a continuous mapping and learn a model to approximate the continuous time mapping.
Similarly, probability flow ODEs~\citep{song2020score} learned by diffusion models~\citep{sohl2015deep} approximate an implicitly defined continuous mapping.
Our work is useful for flexible sampling from such pre-trained continuous time deterministic Gaussian flows, or more generally where the score function for all the marginal distributions is either provided or can be deduced from the learned flow model.

\paragraph{Stochastic transport.}
Stochastic transport methods learn a stochastic mapping, where a sample from one distribution gets stochastically mapped to a sample from the other.
Gaussian diffusion models are a salient example of such discrete~\cite{sohl2015deep,ho2020denoising} or continuous time~\cite{song2020score,kingma2021variational} mappings where one of the distributions is constrained to be Gaussian.
Several generalizations have been proposed that extend from Gaussian to more general families of distributions~\cite{yoon2024score}.
The stochastic interpolants framework~\citep{albergo2022building,albergo2023stochastic,ma2024sit} further generalizes to a larger family of distributions by introducing a random latent variable allowing efficient estimation of the score function at all times.
Our work is directly applicable to models learned with such methods where the score function is accessible and can be used to construct and explore a large family of samplers.
The convergence rates of diffusion models have been studied in \cite{chen2023score,benton2023linear} with respect to the number of data samples and dimensionality.
However, since our method does not require retraining, it does not affect these properties of the original training algorithms.

\paragraph{Schr\"odinger bridge and optimal transport.}
These methods consider a more general problem of learning transport maps with additional constraints.
k-Rectified flows~\cite{liu2022flow} provide an iterative procedure for tackling deterministic optimal transports for a family of costs, while the more general Schr\"odinger bridge problem, viewed as an entropy regularized optimal transport, is an active area of research \cite{shi2024diffusion,liu2023i2sb}.
Our work is complementary to these methods as we focus on flexible sampling, given the access to the score function for the marginal distributions.

\section{Conclusion}
\label{sec:conclusion}

We introduced a general method to identify a family of SDEs that have the same marginal distribution as a particular SDE, including the special case where the diffusion coefficient of the given SDE is zero.
This special case corresponds to flow models which naively only support deterministic sampling.
Our method enables flexible construction of stochastic samplers for such deterministic models where the diffusion coefficient can be chosen at sampling time from an infinitely large set of possibilities in an application and evaluation metric dependent way.
Our method requires explicit access to the score function, in absence of which it is limited to a subset of flow models where the score function can be derived from the given flow model.
However, this set includes currently popular rectified flow and diffusion models where one of the distributions is Gaussian.

\bibliography{arxiv_main.bib}
\bibliographystyle{iclr2025_conference}


\clearpage
\appendix
\section{Derivation of singular SDE}
\label{sec:singular_sde_app}

We consider the following SDE with additive noise; i.e., the diffusion coefficient $g$ is only a function of time.
\begin{align}
dx = f(x, t)dt + g(t)dW_t
\end{align}
The perturbation kernel $p(x_t | x_0)$ corresponding to rectified flow is Gaussian, with $p(x_t | x_0) = N(x_t; (1-t)x_0+t\mu_1, t^2\sigma_1^2I)$.
Since the perturbation kernel is Gaussian, following \citet{song2020score}, we assume that the drift term is affine; i.e. $f(x, t) \equiv f(t)x$. 
Further since $X_0, X_1$ are independent, we can directly infer the first and second moments $\mu_t, \Sigma_t$ for the marginals $p_t(x)$ as $\mu_t =  (1-t)\mu_0 + t\mu_1$ and $\Sigma_t = (1-t)^2\Sigma_0 + t^2\sigma_1^2I$.

From Eq (5.50) of \citet{sarkka2019applied} we have
\begin{align}
\frac{d\mu_t}{dt} &= \E_{p_t(x)} [f(t)x] \\
&= f(t)\mu_t
\end{align}
where $\mu_t$ is the mean at time $t$. Rearranging and integrating both sides:
\begin{align}
\ln \frac{\mu_t}{\mu_0} &= \int_0^t f(s)ds & \\
\ln \frac{(1-t)\mu_0+t\mu_1}{\mu_0} &= \int_0^t f(s)ds & \text{Substituting}~\mu_t = (1-t)\mu_0+t\mu_1\\
\frac{\mu_1 - \mu_0}{(1-t)\mu_0 + t\mu_1} &= f(t)  &\text{Differentiating both sides w.r.t. $t$}\\
\end{align}
Substituting $\mu_1=0$, we get as in \cref{eq:singular_choices}:
\begin{align}
\label{eq:singular_f}
f(x, t) &= -\frac{x}{1-t}
\end{align}
Similarly, from Eq. (5.51) of \citet{sarkka2019applied}:
\begin{align}
\frac{d\Sigma_t}{dt} &= \E_{p_t(x)}\left[f(x,t)(x-\mu_t)^T + (x - \mu_t)f(x, t)^T + G(x, t)QG(x, t)^T\right]
\end{align}
Substituting $Q \equiv I$ (we are assuming isotropic dispersion), $G(x, t) \equiv g(t)I$ (symmetric, time-dependent diffusion coefficient), and $f(x,t)$ from \cref{eq:singular_f}:
\begin{align}
\frac{d\Sigma_t}{dt} &= \E_{p_t(x)}\left[-\frac{x}{1-t}(x-\mu_t)^T - (x - \mu_t)\frac{x^T}{1-t} + g^2(t)I\right] \\
&= \frac{2}{1-t}\E_{p_t(x)}\left[-xx^T + \mu_t\mu_t^T\right] + g^2(t)I \\
&= -\frac{2\Sigma_t}{1-t} + g^2(t)I \\
\label{eq:inhomo_sigma_diff}
\implies \frac{d\Sigma_t}{dt} + \frac{2\Sigma_t}{1-t} &= g^2(t)I
\end{align}
Above is an inhomogenous differential equation.
The integrating factor $I(t)$ can be calculated as:
\begin{align}
I(t) &= \exp \left(\int_0^t \frac{2}{1-s}ds \right) = \frac{1}{(1-t)^2}
\end{align}
Multiplying both sides of \cref{eq:inhomo_sigma_diff}, we can write:
\begin{align}
\frac{d}{dt}\left[ \frac{\Sigma_t}{(1-t)^2} \right] &= \frac{g^2(t)I}{(1-t)^2}
\end{align}
Integrating both sides:
\begin{align}
\left[ \frac{\Sigma_s}{(1-s)^2} \right]_0^t &= \int_0^t\frac{g^2(s)I}{(1-s)^2}ds \\
\frac{\Sigma_t}{(1-t)^2} - \Sigma_0 &= \int_0^t\frac{g^2(s)I}{(1-s)^2}ds
\end{align}
Substituting $\Sigma_t = (1-t)^2\Sigma_0 + t^2\sigma_1^2I$:
\begin{align}
\frac{(1-t)^2\Sigma_0 + t^2\sigma_1^2I}{(1-t)^2} - \Sigma_0^2 &= \int_0^t\frac{g^2(s)I}{(1-s)^2}ds
\end{align}
Differentiating both sides w.r.t. $t$ and simplifying yields:
\begin{align}
g^2(t) = \frac{2t\sigma_1^2}{1-t}
\end{align}
Substituting $\sigma_1 = 1$ to the result in \cref{eq:singular_choices}:
\begin{align}
g(t) = \sqrt{\frac{2t}{1-t}}
\end{align}

\section{Score function from rectified flow}
\label{sec:score_fn_app}
Given a base data distribution $p(x)$ and a conditional noising distribution $p_\sigma(\tilde  x | x)$, Denoising score matching \cite{vincent2011connection} learns the score for the marginal $p_\sigma(\tilde x)$ by optimizing:
\begin{equation}
\nabla_{\tilde x} \ln p_\sigma(\tilde x) = \argmin_{\psi} \E_{p_\sigma(x_0, \tilde x)} \left[ \frac{1}{2} \left\lVert \psi(\tilde x) - \frac{\partial \ln p_{\sigma}(\tilde x | x_0)}{\partial \tilde x} \right\lVert^2\right]
\end{equation}
where $p_\sigma(x_0, \tilde x) \equiv p(x)p_\sigma(\tilde x| x_0)$.
The solution to the above optimization problem can be written as:
\begin{equation}
\nabla_{\tilde x} \ln p_\sigma(\tilde x) = \E_{p_\sigma(x_0|\tilde x)} \frac{\partial \ln p_{\sigma}(\tilde x | x_0)}{\partial \tilde x}
\end{equation}
Mapping the above to rectified flow with $\sigma \equiv t, \tilde x \equiv x_t$ we get:
\begin{equation}
\label{eq:dsm_solution}
\nabla_{x_t} \ln p_t(x_t) = \E_{p_t(x_0|x_t)} \frac{\partial \ln p_t(x_t | x_0)}{\partial x_t}
\end{equation}
Next if:
\begin{align}
p_t(x_t | x_0) &= N(x_t; \mu(x_0, t), \sigma^2(x_0, t)I) \\
\frac{\partial \ln p_t(x_t | x_0)}{\partial x_t} &= \frac{\partial}{\partial x_t}\frac{-||x_t - \mu(x_0, t)||^2}{2\sigma(x_0, t)^2} \\
&= \frac{-(x_t - \mu(x_0, t))}{\sigma(x_0, t)^2}
\end{align}
Now:
\begin{equation}
\nabla_{x_t} \ln p_t(x_t) = \E_{p_t(x_0|x_t)} \frac{-(x_t - \mu(x_0, t))}{\sigma(x_0, t)^2}
\end{equation}
Next, assume the covariance $\sigma(x_0,t)$ doesn't depend on $x_0$ -- i.e., $\sigma(x_0, t) \equiv \sigma(t)$ -- and the mean $\mu(x_0, t)$ is linear in $x_0$.
Then:
\begin{align}
\E_{p_t(x_0|x_t)} \frac{-(x_t - \mu(x_0, t))}{\sigma(x_0, t)^2} &= \E_{p_t(x_0|x_t)} \frac{-(x_t - \mu(x_0, t))}{\sigma(t)^2} \\
\label{eq:gaussian_score}
&= \frac{-(x_t - \mu(\E[x_0|x_t], t))}{\sigma(t)^2}
\end{align}

\subsection{Gaussian Rectified Flow}
Consider the special case where $x_t = (1-t)x_0 + tx_1, x_1 \sim N(x_1; \mu_1, \sigma_1^2I)$.
We have $p_t(x_t|x_0) = N((1-t)x_0 + t\mu_1, t^2\sigma_1^2)$.
Using the result from \cref{eq:gaussian_score} we get:
\begin{align}
\nabla_{x_t} \ln p_t(x_t) &= \frac{-(x_t - \mu(\E[x_0|x_t], t))}{\sigma(t)^2}
\end{align}
From this, we can write:
\begin{align}
x_1 &= \frac{x_t - (1-t)x_0}{t} \\
\E [x_1-x_0 | x_t] &= \E \left[\frac{x_t - (1-t)x_0}{t} - x_0 \bigg\lvert x_t\right] \\
&= \E \left[\frac{x_t - (1-t)x_0 - tx_0}{t} | x_t\right] \\
&= \E \left[\frac{x_t-x_0}{t} | x_t\right] \\
&= \frac{x_t - \E[x_0|x_t]}{t} \\
\E[x_0|x_t] &= x_t - t\E [x_1-x_0 | x_t] \\
\nabla_{x_t} \ln p_t(x_t) &= \frac{\mu(\E[x_0|x_t], t)-x_t}{\sigma(t)^2} \\
&= \frac{(1-t)\E[x_0|x_t]+t\mu_1-x_t}{t^2\sigma_1^2} \\
&= \frac{(1-t)(x_t - t\E [x_1-x_0 | x_t])+t\mu_1-x_t}{t^2\sigma_1^2} \\
&= \frac{-(1-t)t\E [x_1-x_0 | x_t]+t\mu_1-tx_t}{t^2\sigma_1^2} \\
&= \frac{-(1-t)\E [x_1-x_0 | x_t]+\mu_1-x_t}{t\sigma_1^2} \label{eq:gauss_rect_score}
\end{align}

\subsection{General Rectified Flow}
\label{sec:general_rect_flow}

First recall the change of variables formula for a density $p(x)$ with $y=g(x)$ where $g$ is invertible and $g^{-1}$ is differentiable:
\begin{align}
p(y) &= p(g^{-1}(y))\left\lvert \det \left[\frac{\partial g^{-1}(z)}{\partial z}\right]_{z=y} \right\rvert
\end{align}
Now, with $x_0 \sim p_1(x_0)$ and $x_1 \sim p_1(x_1)$ and $x_0,x_1 \in \R^d$, let $x_t = g(x_1; x_0)$ be a function that is invertible in first argument and whose inverse $g^{-1}(x_t;x_0)$ is differentiable w.r.t. the first argument.
Note that for simple rectified flows, $x_t = (1-t)x_0 + tx_1$ satisfies these conditions.

We can now express the conditional density $p(x_t | x_0)$ as:
\begin{align}
p(x_t | x_0) = p_1(g^{-1}(x_t;x_0))\left\lvert \det \left[\nabla_z g^{-1}(z;x_0)\right]_{z=x_t} \right\rvert
\end{align}
The score for the conditional density can then be calculated as
\begin{align}
 \frac{\partial \ln p_t(x_t | x_0)}{\partial x_t} &= \nabla_z \ln p_1(z)|_{z=g^{-1}(x_t; x_0)} + \nabla_{x_t} \ln \left\lvert \det \left[\nabla_z g^{-1}(z;x_0)\right]_{z=x_t} \right\rvert
\end{align}
and the score for the marginal density as:
\begin{align}
 \nabla_{x_t} \ln p_t(x_t) &= \E_{p_t(x_0|x_t)} \frac{\partial \ln p_t(x_t | x_0)}{\partial x_t} \\
 \label{eq:general_rect_flow_score}
 &= \E_{p_t(x_0|x_t)} \left[\nabla_z \ln p_1(z)|_{z=g^{-1}(x_t; x_0)} + \nabla_{x_t} \ln \left\lvert \det \left[\nabla_z g^{-1}(z;x_0)\right]_{z=x_t} \right\rvert \right]
\end{align}
For the specific case of Rectified flows, define $g(x_1;x) = (1-t)x + tx_1$.
Then:
\begin{align}
x_t &= g(x_1; x_0) \\
g^{-1}(x_t; x_0) &= \frac{x_t - (1-t)x_0}{t} & \quad \text{Inverse is w.r.t. first argument} \\
\frac{\partial g^{-1}(x_t; x_0)}{\partial x_t} &= \frac{1}{t}I \\
\det \frac{1}{t}I &= \frac{1}{t^d} & \quad \text{$I$ is $d \times d$} \\
p_t(x_t | x_0) &= \frac{1}{t^d}p_1\left(\frac{x_t - (1-t)x_0}{t}\right)
\end{align}
Substituting into \cref{eq:general_rect_flow_score}:
\begin{align}
\nabla_{x_t} \ln p_t(x_t) &= \E_{p_t(x_0|x_t)} \left[\frac{1}{t}\nabla_z \ln p_1(z)|_{z=g^{-1}(x_t; x_0)} \right]
\end{align}
It can be verified that with the choice of $p_1(x_1) \equiv N(\mu_1, \sigma_1^2 I)$, we recover \cref{eq:gauss_rect_score}.

\section{Proof of \cref{thm:general}}
\label{sec:mainresult_proof_app}

\mainresult*
\begin{proof}
The evolution of the marginal probability density $p_t(x)$ is then described by the Fokker-Planck-Kolmogorov (FPK) equation \citep{sarkka2019applied} as:
\begin{align}
\label{eq:full_fpk_expanded}
\frac{\partial p_t}{\partial t} = -\sum_{i=1}^{d} \frac{\partial}{\partial x_i}[\FB p_t] + \frac{1}{2}\sum_{i=1}^{d}\sum_{j=1}^{d} \frac{\partial^2}{\partial x_ix_j}\left[\sum_{k=1}^{d}\GB_{ik}\GB_{jk}p_t \right]
\end{align}

We write the above more succinctly as:
\begin{align}
\label{eq:full_fpk}
\frac{\partial p_t}{\partial t} = -\nabla \cdot [\FB p_t] + \frac{1}{2}\nabla \cdot\left[\GB\GB^Tp_t \right] \cdot \nabla^T
\end{align}
Where $\nabla \cdot$ is the divergence operator.
Next for an arbitrary $R \equiv R(x, t)$ consider the following identity:
\begin{align}
\left[RR^Tp_t \right] \cdot \nabla^T &= \nabla\cdot \left[RR^Tp_t \right] & \quad RR^T~\text{ is symmetric}\\
&= \left[\nabla\cdot RR^T \right]p_t  + RR^T \cdot \nabla p_t \\
&= \left[\nabla\cdot RR^T \right]p_t  + RR^T \cdot p_t \nabla\ln  p_t \\
\label{eq:r_identity}
&= \left(\nabla\cdot RR^T  + RR^T \cdot \nabla \ln p_t \right)p_t
\end{align}
Expanding out $\FB p_t$ by substituting for $\FB$:
\begin{align}
\FB p_t &= \left[f - \frac{1}{2}\left(\nabla \cdot [(1-\gamma_t)GG^T-\GT\GT^T] + [(1-\gamma_t)GG^T-\GT\GT^T]\cdot\nabla\ln p_t \right)\right] p_t \\
\begin{split}
&= fp_t - \frac{1-\gamma_t}{2}\left(\nabla \cdot GG^T + GG^T\cdot\nabla\ln p_t \right) p_t \\
&+ \frac{1}{2}\left(\nabla \cdot \GT\GT^T + \GT\GT^T\cdot\nabla\ln p_t \right) p_t
\end{split}
\end{align}
Using \cref{eq:r_identity} and rewriting:
\begin{align}
\begin{split}
\FB p_t &= fp_t - \frac{1-\gamma_t}{2}[GG^Tp_t] \cdot \nabla^T + \frac{1}{2}[\GT\GT^Tp_t] \cdot \nabla^T
\end{split}
\end{align}
Next we revisit \cref{eq:full_fpk}, and substitute for $\FB p_t$ and $\GB$ with $\GB\GB^T = \gamma_t GG^T + \GT\GT^T$:
\begin{align}
\begin{split}
\frac{\partial p_t}{\partial t} &= -\nabla \cdot [fp_t - \frac{1-\gamma_t}{2}[GG^Tp_t] \cdot \nabla^T + \frac{1}{2}[\GT\GT^Tp_t] \cdot \nabla^T ] \\
&+ \frac{1}{2}\nabla \cdot\left[(\gamma_t GG^T + \GT\GT^T)p_t \right] \cdot \nabla^T
\end{split} \\
\begin{split}
&= -\nabla \cdot [fp_t] + \frac{1-\gamma_t}{2}\nabla \cdot [GG^Tp_t] \cdot \nabla^T - \frac{1}{2}\nabla \cdot [\GT\GT^Tp_t] \cdot \nabla^T \\
&+ \frac{\gamma_t}{2}\nabla \cdot\left[GG^Tp_t \right] \cdot \nabla^T + \frac{1}{2}\nabla \cdot\left[\GT\GT^Tp_t \right] \cdot \nabla^T
\end{split}
\end{align}
With cancellations, we arrive at:
\begin{align}
\frac{\partial p_t}{\partial t} &= -\nabla \cdot [fp_t] + \frac{1}{2}\nabla \cdot [GG^Tp_t] \cdot \nabla^T
\end{align}
which is the FPK equation describing the time evolution of the marginal probability density $p_t(x)$ of the solutions of the SDE in \cref{eq:general_sde_paper}.

\end{proof}

\section{Proof of \cref{cor:diffusion_corollary}}
\label{sec:corollaryone_proof_app}

\corollaryone*
\begin{proof}
Starting with Theorem 1, let's define $\tilde G \equiv 0$ and $G \equiv g(t)I$, where $g(t)$ is a scalar valued function. These choices lead to following

\begin{align}
\bar G &= [\gamma_t(g(t)I)^2]^{\frac{1}{2}} = \sqrt{\gamma_t}g(t)I \\
\bar f &= f - \frac{1}{2}\left(\nabla \cdot [(1-\gamma_t)g^2(t)I] + [(1-\gamma_t)g^2(t)I]\cdot\nabla\ln p_t \right)\\
&= f - \frac{1}{2}\left([(1-\gamma_t)g^2(t)I]\cdot\nabla\ln p_t \right) \label{eq:corr1p1-p-1} \\
&= f - \frac{(1-\gamma_t)g^2(t)}{2}\nabla\ln p_t
\end{align}

Note that \cref{eq:corr1p1-p-1} follows from $\nabla \cdot [(1-\gamma_t)g^2(t)I] = 0$ since neither $\gamma_t$ nor $g(t)$ are functions of $x$.
\end{proof}

\section{Proof of \cref{cor:detflow_corollary}}
\label{sec:corollarytwo_proof_app}

\corollarytwo*
\begin{proof}
First note that $f \equiv v(x, t)$ by definition from equation (3) in the paper. Now, again starting with Theorem 1, let's define $\tilde G \equiv \tilde g(t)I$ and $G \equiv 0$, where $\tilde g(t)$ is a scalar valued function. These choices lead to following

\begin{align}
\bar G &= [(\tilde g(t)I)^2]^{\frac{1}{2}} = \tilde g(t)I \\
\bar f &= v(x, t) - \frac{1}{2}\left(\nabla \cdot [-\tilde g^2(t)I] + [-\tilde g^2(t)I]\cdot\nabla\ln p_t \right)\\
&= v(x, t) + \frac{1}{2}\left([\tilde g^2(t)I]\cdot\nabla\ln p_t \right) \label{eq:corr1p2-p-1} \\
&= v(x, t) + \frac{\tilde g^2(t)}{2}\nabla\ln p_t
\end{align}

Note that \cref{eq:corr1p2-p-1} follows from $\nabla \cdot [-\tilde g^2(t)I] = 0$ since $\tilde g(t)$ is not a function of $x$.
\end{proof}

\section{Closed form rectified flow expression for the two Gaussian case}
\label{sec:closed_gaussian_app}

Our empirical studies use a two Gaussian toy problem setup.
We state the closed form expression for the rectified flow for this case.
Consider $x_0 \sim N(\mu_0, \sigma^2_0 I), x_1 \sim N(\mu_1, \sigma^2_1 I)$:
\begin{equation}
x_t = \alpha_t x_0 + \beta_t x_1, \quad \alpha_t > 0, \alpha_0=1, \alpha_1=0, \beta_t > 0, \beta_0=0, \beta_1=1
\end{equation}
The marginal density $p_t(x_t)$ is also Gaussian:
\begin{equation}
p_t(x_t)=  N(x_t; \alpha_t \mu_0 + \beta_t \mu_1, \alpha_t^2\sigma_0^2 + \beta_t^2\sigma_1^2)
\end{equation}
We have:
\begin{align}
v(x, t) = \E [x_1 - x_0 | x_t] \equiv \mathbb E_{p(x_0, x_1 | x_t)} [x_1 - x_0]
\end{align}
Using the following:
\begin{align}
p(x_0, x_1 | x_t) &= \frac{p(x_t| x_0, x_1)p(x_0,x_1)}{p(x_t)} = \frac{p(x_t| x_0, x_1)p_0(x_0)p_1(x_1)}{p(x_t)} \\
p(x_t| x_0, x_1) &= \delta(x_t - (1-t)x_0 - tx_1)
\end{align}
and elementary properties of Gaussian and Dirac delta distributions, it can be verified that:
\begin{equation}
\label{eq:closed_form_gaussian}
v(x, t) = \frac{(k_t\mu_1-x_t)\alpha_t\sigma_0^2 + (x_t-k_t\mu_0)\beta_t\sigma_1^2}{\alpha_t^{2}\sigma_0^2 + \beta_t^{2}\sigma_1^2} 
\end{equation}
where $k_t=\alpha_t + \beta_t$.

\section{Toy Gaussian Experiment Details}
\label{sec:toy_details}

In the experiments in \cref{sec:guass_experiments} we study how various SDEs in \cref{tab:sde_family_table} behave on a toy problem where both $p_0 \equiv N(-1, 0.3)$ and $p_1 \equiv N(0, 1.0)$ are Gaussian.
In this case the marginal distributions $p_t$ for Gaussian flow are Gaussian with $p_t = N(\mu_t, \sigma_t^2)$ and the true statistics $\mu_t, \sigma_t^2$ can be easily computed.
In addition, the rectified flow $v(x, t)$ is available in closed form (see \cref{sec:closed_gaussian_app}).
The SDEs are simulated backwards in time from $t=1$ with draws from $p_1$ using \cref{eq:reverse_sde_song}.
The drift $\tilde f$ and diffusion $\tilde g$ terms are calculated using \cref{tab:sde_family_table} by setting $\alpha=1$ and using the closed form $v(x, t)$ from \cref{eq:closed_form_gaussian}.
We simulate 10 trials of 10000 trajectories using Euler-Maruyama discretization with varying number of steps. Estimates for mean $\mu_t$ and variance $\sigma_t^2$ at each timestep for various SDEs are calculated, along with their standard deviation across trials.
Error, calculated as estimate - truth, is then plotted in \cref{fig:toy_samplers,fig:toy_steps_eval,fig:toy_bias_var} for both the mean and the variance estimates along with the KL-Divergence from the true marginal distribution.

\section{ImageNet experiment training/evaluation details}
\label{sec:imagenet_training_details_app}
We train two base Rectified flow models to yield $v(x, t)$ at two resolutions of 64 $\times$ 64 and 128 $\times$ 128, on the entire ImageNet training dataset containing roughly 1.2 million images.
Our model is based on the architecture described in \citet{hoogeboom2023simple}.
The model is structured such that the lower feature map resolution is 16 $\times$ 16.
Therefore, for 64 $\times$ 64 resolution two downsamplings are performed, while for 128 $\times$ 128 three downsamplings are performed.
The model is trained with SGD using adamw~\citep{kingma2014adam,loshchilov2017decoupled} with $\beta_1=0.9, \beta_2=0.99, \epsilon=10^{-12}$ for $1000$ epochs.
We use center crop and left-right flips as the only augmentations.
An exponential moving average, with a decay of $0.9999$, of parameters is used for evaluation.
FIDs are reported over the training dataset with reference statistics computed with center crop but without any augmentation, but with class conditioning.
Samplers were evaluated for all $\alpha \in \{0.0, 0.02, 0.04, \dots, 1.0\}$.

The $64 \times 64$ model trained for 500 epochs in 4 days, 8 hours on $8 \times 8$ Google Cloud TPUs v3.
The $128 \times 128$ model trained for 500 epochs in 4 days, 20 hours on $8 \times 8$ Google Cloud TPUs v3.

\section{Example Implementation}
\label{sec:code}

See \cref{lst:nonsingular_code} for an example implementation of the NonSingular sampler.

\section{Additional Experimental Results}
\label{sec:additional_experiments}

\begin{figure}[tb]
\centering
\begin{subfigure}[b]{0.48\linewidth}
\includegraphics[width=\textwidth]{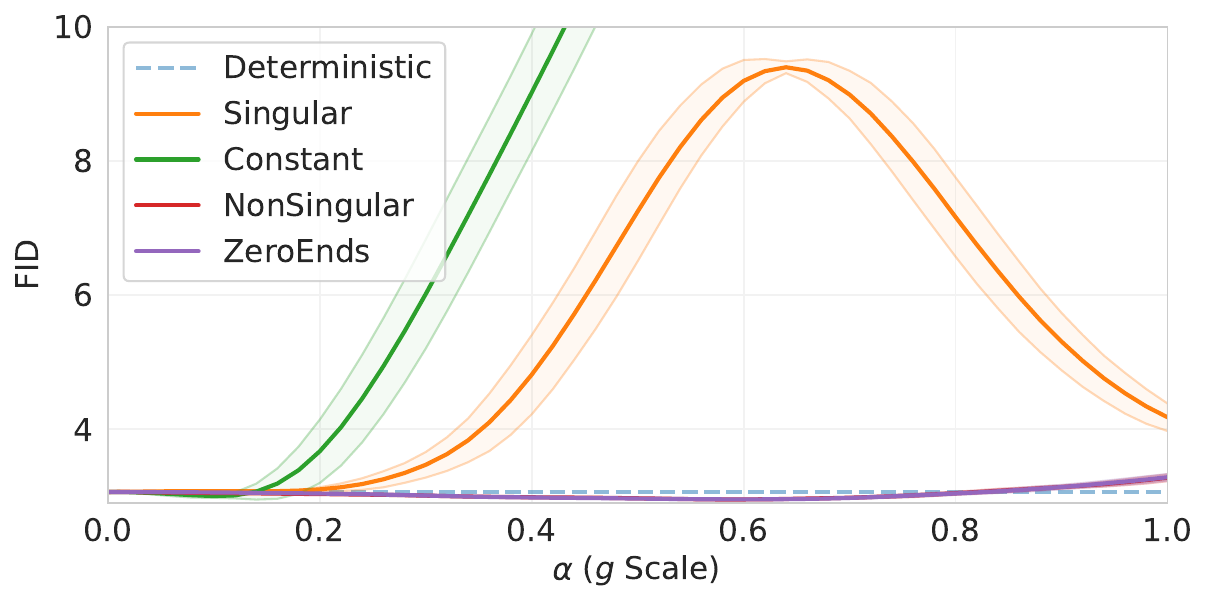}
\caption{64 $\times$ 64}
\label{fig:alpha_comparison_64}
\end{subfigure}
\begin{subfigure}[b]{0.48\linewidth}
\includegraphics[width=\textwidth]{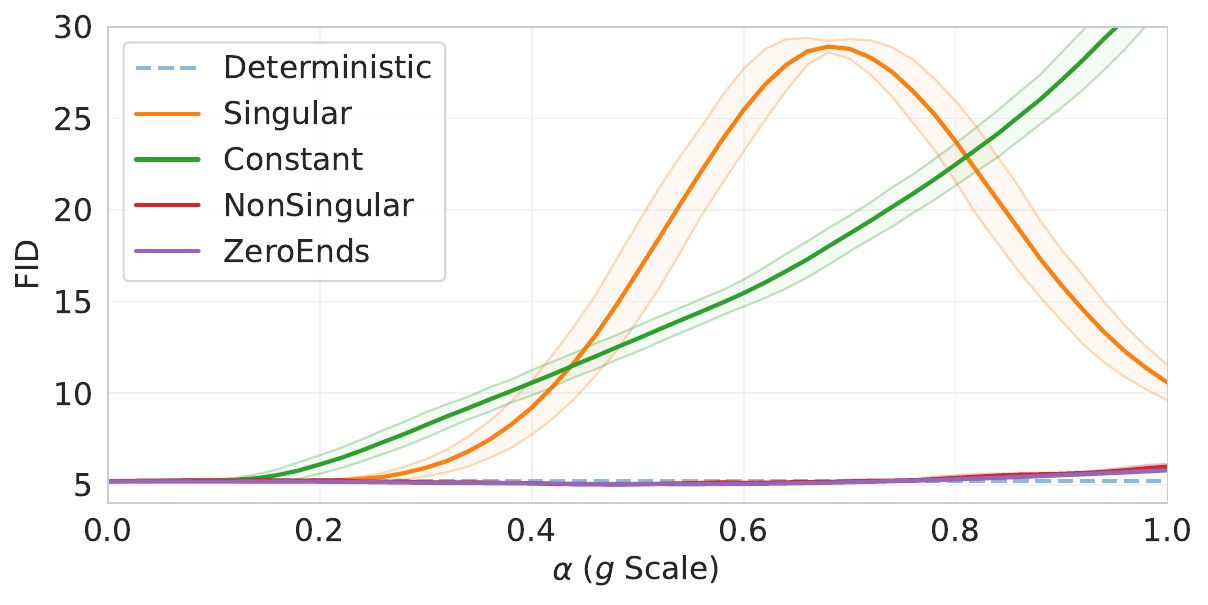}
\caption{128 $\times$ 128}
\label{fig:alpha_comparison_128}
\end{subfigure}
\caption{%
    \textbf{Non-singular samplers work well over a broad range of $\alpha$.}
    The same plots as \cref{fig:imagenet_fid_alpha}, but showing a larger range of FIDs.
    Note that the Singular sampler is highly non-monotonic as a function of $\alpha$.
}
\label{fig:imagenet_fid_alpha_uncropped}
\end{figure}

\subsection{FID vs $\alpha$}

In \cref{fig:imagenet_fid_alpha_uncropped} we show a larger range FID for various samplers compared in \cref{fig:imagenet_fid_alpha}.
We observe that the Singular sampler tends to perform well only at low scales with an intriguing behavior for higher scales where the FID starts to improve again after worsening significantly.

\subsection{Effect of diffusion coefficient magnitude on samples}
\label{sec:diff_coeff_mag_qual}

We qualitatively visualize the effect of diffusion coefficient magnitude for the three SDEs discussed in the main paper.
\cref{fig:constantg_scale} visualizes samples for the constant diffusion term SDE as a function increasing coefficient magnitude.
Each column is a different sample starting with the same random draw from $p_1(x_1)$.
Each row corresponds to a different magnitude for the diffusion coefficient $g(t)$.
\cref{fig:nonsingular_scale,fig:singular_scale} visualize samples with a similar scheme for the non-singular and singular SDE.

\subsection{Classifier-free guidance samples}
\cref{fig:qual_alpha_vs_lambda} shows additional samples using classifier-free guidance with NonSingular at different values of $\alpha$ and $\lambda$, as in \cref{fig:imagenet_cfg}.

\begin{figure}[tbp]
\small
\caption{NonSingular Sampler written in JAX.}
\label{lst:nonsingular_code}
\begin{lstlisting}[language=Python]
def non_singular_sampler(
    rng, num_samples, model, params, labels, g_scale, num_steps=1000,
    batch_size=10, image_size=64, num_channels=3, num_classes=1000,
    n=1, m=0):
  """Draw samples from the model."""
  p_1_samples = []
  p_0_samples = []
  t = jnp.linspace(1., 0., num_steps+1)
  t_ones = jnp.ones([batch_size, 1, 1])

  # Sampler loop body
  def body_fn(i, z):
    z, labels, rng = z
    tb = t[i] * t_ones
    z_hat = model.apply({'params': params}, z, (1 - tb), labels)
    v = -z_hat
    b = g_scale
    g = b * jnp.power(tb, n / 2) * jnp.power(1 - tb, m / 2)
    s_u = -((1-tb) * v + z)
    fr = (v - jnp.square(b) * jnp.power(tb, n-1
          * jnp.power(1-tb, m) * s_u / 2)
    rng, key = jax.random.split(rng)
    dt = t[i+1] - t[i]
    dbt = (jnp.sqrt(jnp.abs(dt))
           * jax.random.normal(key, shape=z.shape))
    z = z + fr * dt + g * dbt
    return z, labels, rng

  max_steps = num_samples // batch_size
  for _ in range(max_steps):
    # Sample from p_1
    rng, key = jax.random.split(rng)
    z = sample_from_prior(
      key, shape=[batch_size, image_size, image_size, num_channels])
    p_1_samples.append(z)

    # Run the sampler
    rng, key = jax.random.split(rng)
    init_val = (z, labels, key)
    z, _, _ = jax.lax.fori_loop(
        lower=0, upper=num_steps, body_fun=body_fn, init_val=init_val)
    p_0_samples.append(z)

  p_1_samples = jnp.concatenate(p_1_samples, axis=0)
  p_0_samples = jnp.concatenate(p_0_samples, axis=0)
  return p_1_samples, p_0_samples
\end{lstlisting}
\end{figure}

\begin{figure}
\centering
\includegraphics[width=0.95\textwidth]{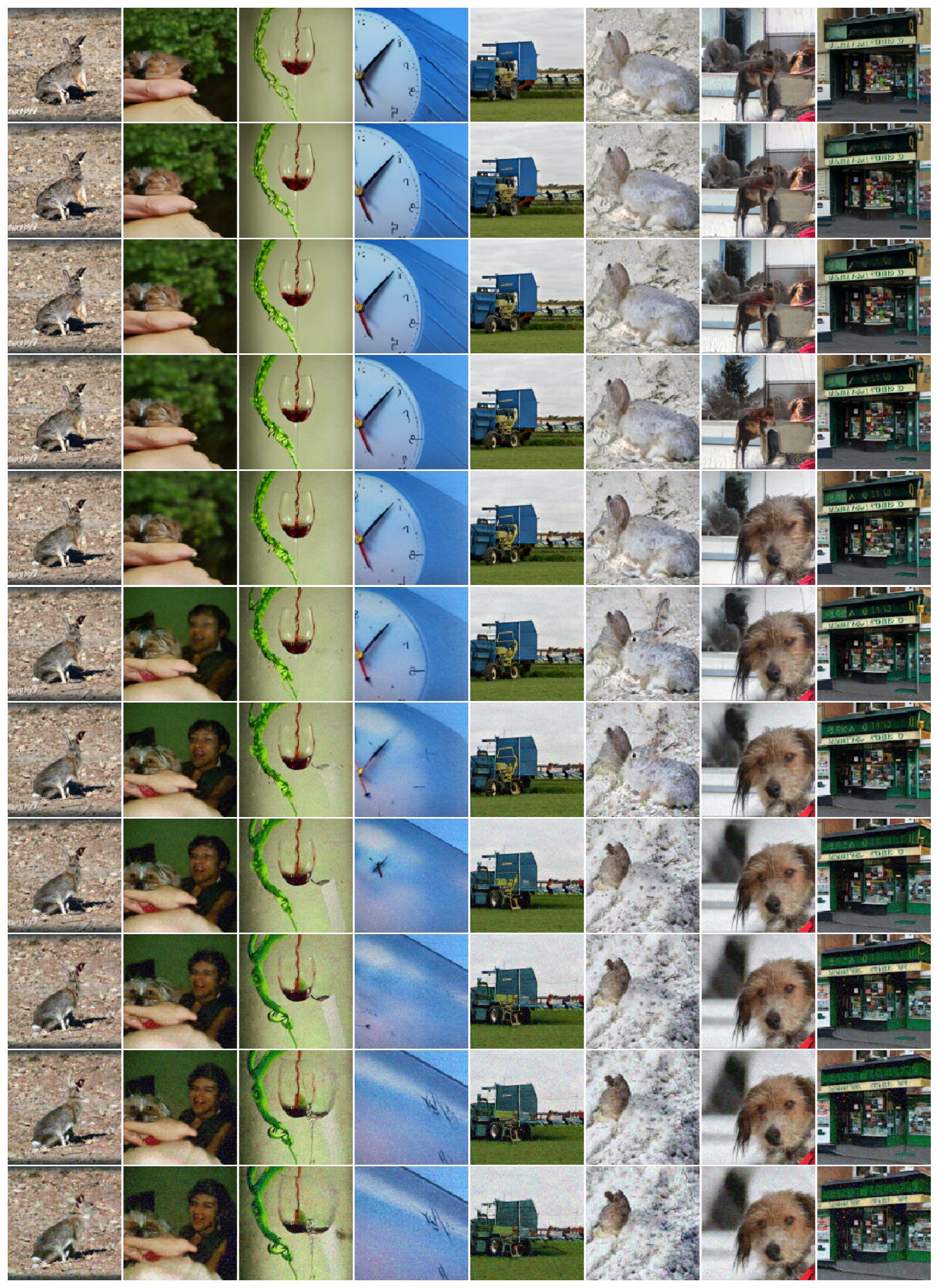}
\caption{%
    \textbf{Constant samples with increasing scaling $\alpha$.}
    Each row displays samples at a particular $g$-scale, from $0$ increasing to $1$ in the increments of $0.1$ from top to bottom.
    Sampling for each columns starts off with the same initial noise image and conditioning class.
}
\label{fig:constantg_scale}
\end{figure}

\begin{figure}
\centering
\includegraphics[width=0.95\textwidth]{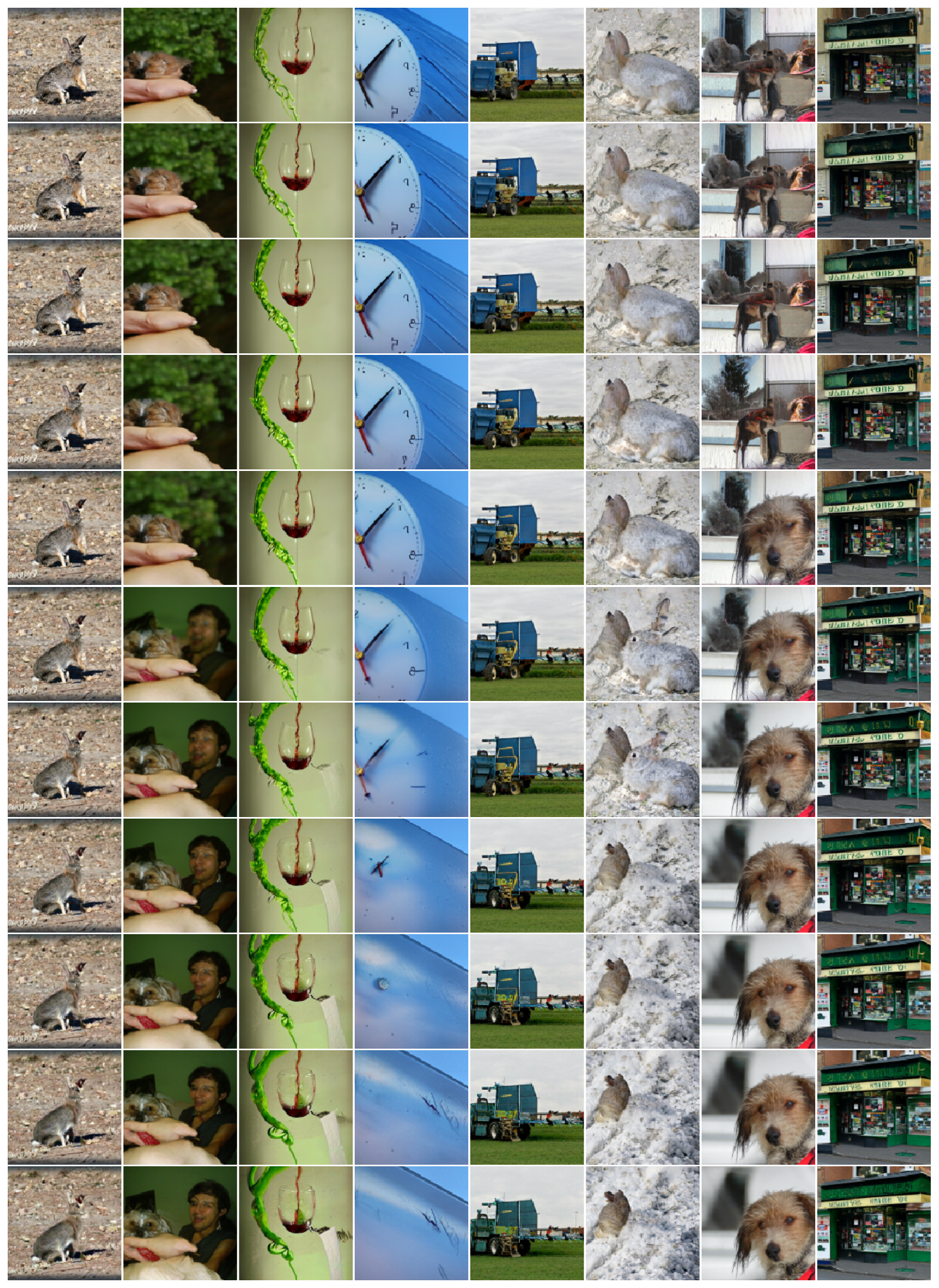}
\caption{%
    \textbf{NonSingular samples with increasing scaling $\alpha$.}
    Each row displays samples at a particular $g$-scale, from $0$ increasing to $1$ in the increments of $0.1$ from top to bottom.
    Sampling for each columns starts off with the same initial noise image and conditioning class.
}
\label{fig:nonsingular_scale}
\end{figure}

\begin{figure}
\centering
\includegraphics[width=0.95\textwidth]{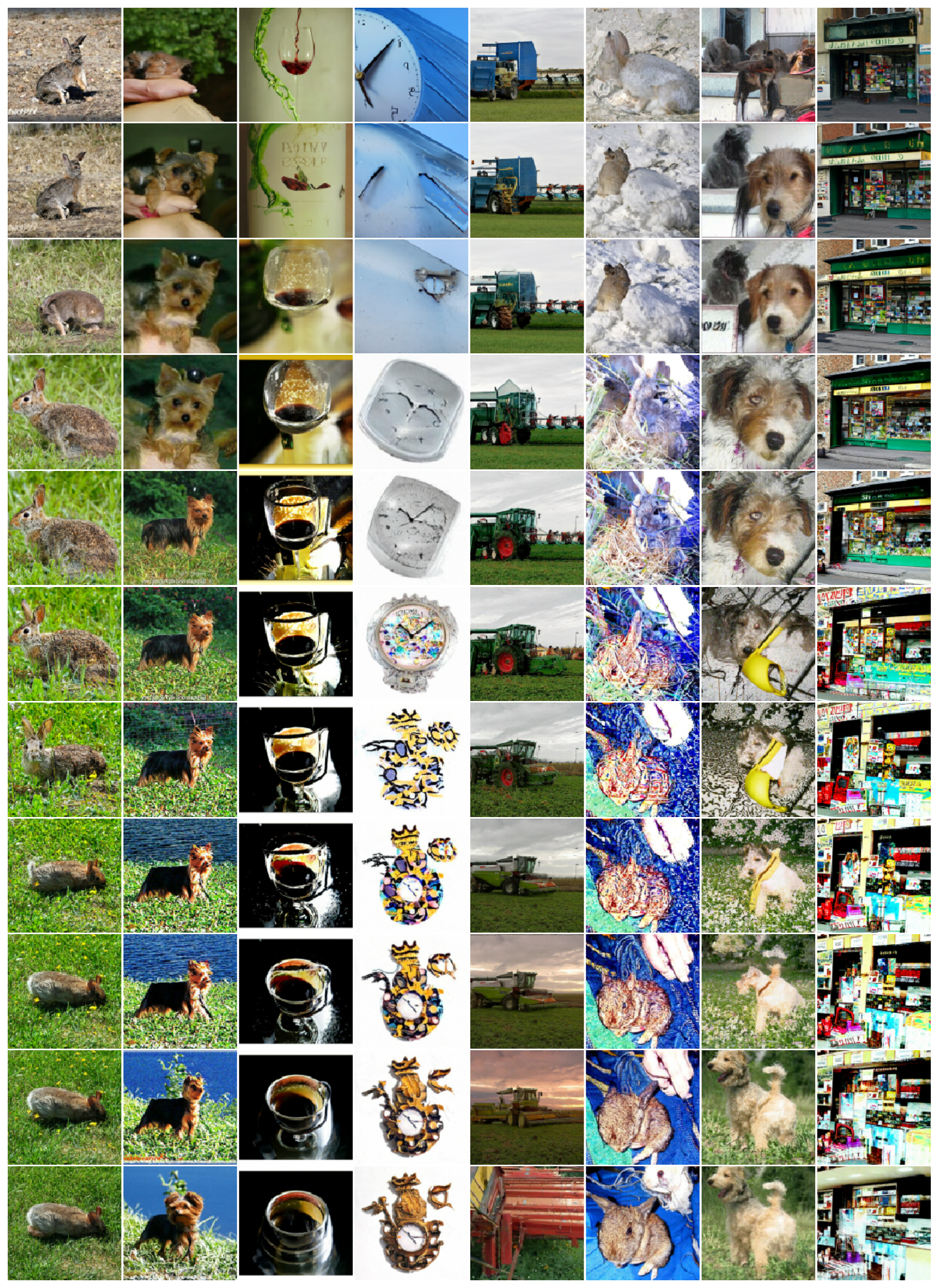}
\caption{%
    \textbf{Singular samples with increasing scaling $\alpha$.}
    Each row displays samples at a particular $g$-scale, from $0$ increasing to $1$ in the increments of $0.1$ from top to bottom.
    Sampling for each columns starts off with the same initial noise image and conditioning class.
}
\label{fig:singular_scale}
\end{figure}

\begin{figure}
\centering
\includegraphics[width=0.99\textwidth]{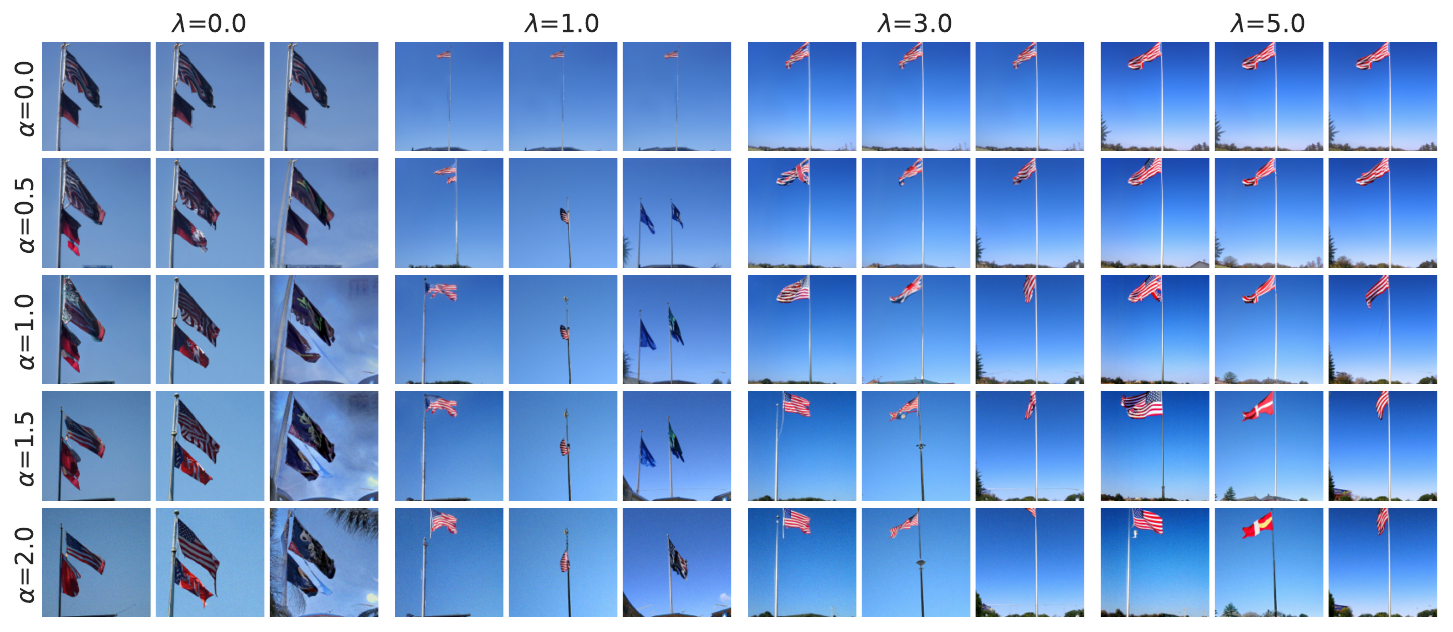}
\includegraphics[width=0.99\textwidth]{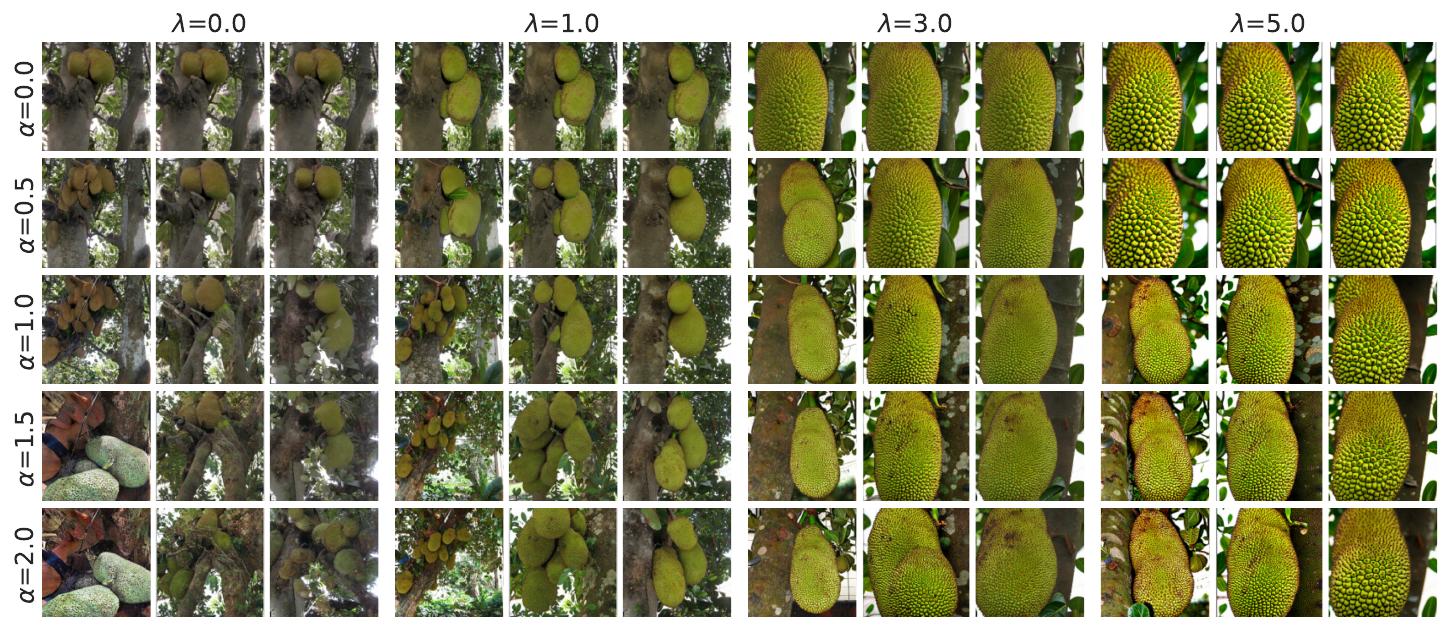}
\includegraphics[width=0.99\textwidth]{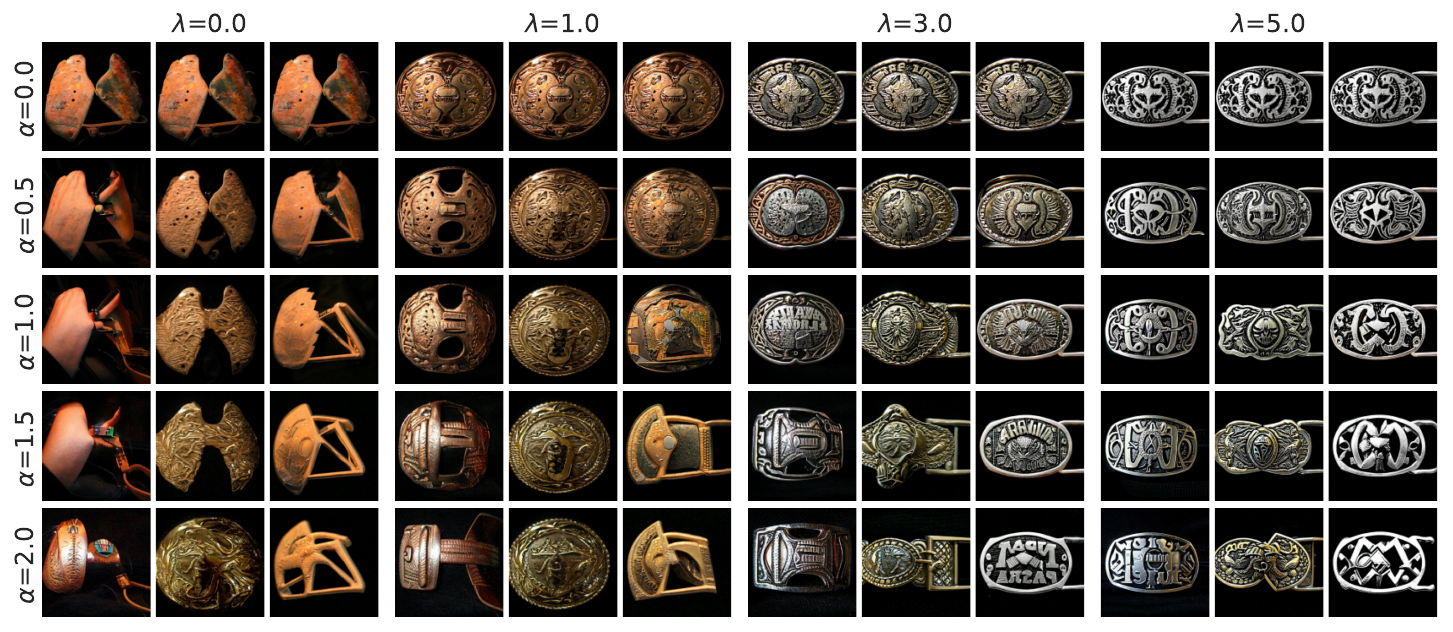}
\caption{%
    \textbf{Stochastic sampling improves diversity at all classifier-free guidance levels.}
    Additional results as in \cref{fig:imagenet_cfg}, described in \cref{sec:imagenet_experiments}.
}
\label{fig:qual_alpha_vs_lambda}
\end{figure}

\end{document}